\newcommand{\commentout}[1]{}
\newcommand{\junk}[1]{}
\newtheorem{theorem}{Theorem}
\newtheorem{lemma}{Lemma}
\newcommand{\cB}{\mathcal{B}}
\newcommand{\cF}{\mathcal{F}}
\newcommand{\cG}{\mathcal{G}}
\newcommand{\cH}{\mathcal{H}}
\newcommand{\eps}{\varepsilon}
\newcommand{\realset}{\mathbb{R}}
\newcommand{\E}[1]{\mathbb{E} \left[#1\right]}
\newcommand{\condE}[2]{\mathbb{E} \left[#1 \,\middle|\, #2\right]}
\newcommand{\prob}[1]{\mathbb{P} \left(#1\right)}
\newcommand{\condprob}[2]{\mathbb{P} \left(#1 \,\middle|\, #2\right)}
\newcommand{\var}[1]{\mathrm{var} \left[#1\right]}
\newcommand{\condvar}[2]{\mathrm{var} \left[#1 \,\middle|\, #2\right]}
\newcommand{\abs}[1]{\left|#1\right|}
\newcommand{\ceils}[1]{\left\lceil#1\right\rceil}
\newcommand*\dif{\mathop{}\!\mathrm{d}}
\newcommand{\floors}[1]{\left\lfloor#1\right\rfloor}
\newcommand{\I}[1]{\mathds{1} \! \left\{#1\right\}}
\newcommand{\set}[1]{\left\{#1\right\}}
\newcommand{\transpose}{^\mathsf{\scriptscriptstyle T}}
\DeclareMathOperator*{\argmax}{arg\,max\,}
\mathchardef\mhyphen="2D
\newcommand{\bootstrapts}{{\tt BootstrapThompson}}
\newcommand{\egreedy}{{\tt EG}}
\newcommand{\giro}{{\tt Giro}}
\newcommand{\klucb}{{\tt KL\mhyphen UCB}}
\newcommand{\lints}{{\tt LinTS}}
\newcommand{\linucb}{{\tt LinUCB}}
\newcommand{\ts}{{\tt TS}}
\newcommand{\ucb}{{\tt UCB1}}
\newcommand{\ucbglm}{{\tt UCB\mhyphen GLM}}
\icmltitlerunning{Garbage In, Reward Out: Bootstrapping Exploration in Multi-Armed Bandits}
\begin{document}

\twocolumn[
\icmltitle{Garbage In, Reward Out: Bootstrapping Exploration in Multi-Armed Bandits}

\icmlsetsymbol{equal}{*}

\begin{icmlauthorlist}
\icmlauthor{Branislav Kveton}{gr}
\icmlauthor{Csaba Szepesv\'ari}{dp,ua}
\icmlauthor{Sharan Vaswani}{mila}
\icmlauthor{Zheng Wen}{ar}
\icmlauthor{Mohammad Ghavamzadeh}{fair}
\icmlauthor{Tor Lattimore}{dp}
\end{icmlauthorlist}

\icmlaffiliation{ar}{Adobe Research}
\icmlaffiliation{dp}{DeepMind}
\icmlaffiliation{fair}{Facebook AI Research}
\icmlaffiliation{gr}{Google Research}
\icmlaffiliation{mila}{Mila, University of Montreal}
\icmlaffiliation{ua}{University of Alberta}

\icmlcorrespondingauthor{Branislav Kveton}{bkveton@google.com}

\vskip 0.3in
]

\printAffiliationsAndNotice{}

\begin{abstract}
We propose a bandit algorithm that explores by randomizing its history of rewards. Specifically, it pulls the arm with the highest mean reward in a non-parametric bootstrap sample of its history with pseudo rewards. We design the pseudo rewards such that the bootstrap mean is optimistic with a sufficiently high probability. We call our algorithm $\giro$, which stands for \emph{garbage in, reward out}. We analyze $\giro$ in a Bernoulli bandit and derive a $O(K \Delta^{-1} \log n)$ bound on its $n$-round regret, where $\Delta$ is the difference in the expected rewards of the optimal and the best suboptimal arms, and $K$ is the number of arms. The main advantage of our exploration design is that it easily generalizes to structured problems. To show this, we propose contextual $\giro$ with an arbitrary reward generalization model. We evaluate $\giro$ and its contextual variant on multiple synthetic and real-world problems, and observe that it performs well.
\end{abstract}

%!TEX root = Paper.tex

\section{Introduction}
\label{sec:introduction}

A \emph{multi-armed bandit} \cite{lai85asymptotically,auer02finitetime,lattimore19bandit} is an online learning problem where actions of the \emph{learning agent} are represented by \emph{arms}. The arms can be treatments in a clinical trial or ads on a website. After the arm is \emph{pulled}, the agent receives its \emph{stochastic reward}. The objective of the agent is to maximize its expected cumulative reward. The agent does not know the expected rewards of the arms and thus faces the so-called \emph{exploration-exploitation dilemma}: \emph{explore}, and learn more about arms; or \emph{exploit}, and pull the arm with the highest estimated reward thus far.

A \emph{contextual bandit} \cite{li10contextual,agrawal13thompson} is a generalization of a multi-armed bandit where the learning agent has access to additional context in each round. The context can encode the medical data of a patient in a clinical trial or the demographic information of a targeted user on a website. In this case, the expected reward is an unknown function of the arm and context. This function is often parametric and its parameters are learned. In \emph{linear bandits} \cite{rusmevichientong10linearly,dani08stochastic,abbasi-yadkori11improved}, this function is linear; and the expected reward is the dot product of a known context vector and an unknown parameter vector.

Arguably, the most used and studied exploration strategies in multi-armed and contextual bandits are \emph{Thompson sampling} \cite{thompson33likelihood,agrawal13further}, the \emph{optimism in the face of uncertainty} \cite{auer02finitetime,abbasi-yadkori11improved}, and the \emph{$\epsilon$-greedy policy} \cite{sutton98reinforcement,auer02finitetime}. The \emph{$\epsilon$-greedy policy} is general and thus widely used in practice. However, it is also statistically suboptimal. Its performance heavily depends on the value of $\epsilon$ and the strategy for annealing it.

\emph{Optimism in the face of uncertainty (OFU)} relies on high-probability confidence sets. These sets are statistically and computationally efficient in multi-armed and linear bandits \cite{auer02finitetime,abbasi-yadkori11improved}. However, when the reward function is non-linear in context, we only know how to construct approximate confidence sets \cite{filippi10parametric,zhang16online,li17provably,jun17scalable}. These sets tend to be conservative \cite{filippi10parametric} and statistically suboptimal.

The key idea in \emph{Thompson sampling (TS)} is to maintain a posterior distribution over model parameters and then act optimistically with respect to samples from it. TS is computationally efficient when the posterior distribution has a closed form, as in multi-armed bandits with Bernoulli and Gaussian rewards. If the posterior does not have a closed form, it has to be approximated. Computationally efficient approximations exist in multi-armed bandits with $[0, 1]$ rewards \cite{agrawal13further}. Such approximations are costly in general \cite{gopalan14thompson,kawale15efficient,lu17ensemble,riquelme18deep}.

To address these problems, bootstrapping exploration has been proposed in both multi-armed and contextual bandits \cite{baransi14subsampling,eckles14thompson,osband15bootstrapped,tang15personalized,elmachtoub17practical,vaswani18new}. Bootstrapping has two advantages over existing exploration strategies. First, unlike OFU and TS, it is easy to implement in any problem, because it does not require problem-specific confidence sets or posteriors. Second, unlike the $\epsilon$-greedy policy, it is data driven and not sensitive to tuning. Despite its advantages and good empirical performance, exploration by bootstrapping is poorly understood theoretically. The strongest theoretical result is that of \citet{osband15bootstrapped}, who showed that a form of bootstrapping in a Bernoulli bandit is equivalent to Thompson sampling.

We make the following contributions in this paper. First, we propose a general randomized algorithm that explores conditioned on its history. We show that some instances of this algorithm are not sound. Second, we propose $\giro$, an algorithm that pulls the arm with the highest mean reward in a non-parametric bootstrap sample of its history with pseudo rewards. We design the pseudo rewards such that the bootstrap mean is optimistic with a high probability. Third, we analyze $\giro$ in a $K$-armed Bernoulli bandit and prove a $O(K \Delta^{-1} \log n)$ bound on its $n$-round regret, where $\Delta$ is the difference in the expected rewards of the optimal and the best suboptimal arms. Our analyses of the general randomized algorithm and $\giro$ provide novel insights on how randomness helps with exploration. Fourth, we propose contextual $\giro$. Finally, we empirically evaluate $\giro$ and its contextual variant on both synthetic and real-world problems, and observe good performance.

%!TEX root = Paper.tex

\section{Setting}
\label{sec:setting}

We adopt the following notation. The set $\set{1, \dots, n}$ is denoted by $[n]$. We define $\mathrm{Ber}(x; p) = p^x (1 - p)^{1 - x}$ and let $\mathrm{Ber}(p)$ be the corresponding Bernoulli distribution. In addition, we define $B(x; n, p) = \binom{n}{x} p^x (1 - p)^{n - x}$ and let $B(n, p)$ be the corresponding binomial distribution. For any event $E$, $\I{E} = 1$ if event $E$ occurs and $\I{E} = 0$ otherwise. We introduce multi-armed and contextual bandits below.

A \emph{multi-armed bandit} \cite{lai85asymptotically,auer02finitetime,lattimore19bandit} is an online learning problem where the learning agent pulls $K$ arms in $n$ rounds. In round $t \in [n]$, the agent pulls arm $I_t \in [K]$ and receives its reward. The reward of arm $i \in [K]$ in round $t$, $Y_{i, t}$, is drawn i.i.d.\ from a distribution of arm $i$, $P_i$, with mean $\mu_i$ and support $[0, 1]$. The means are unknown to the learning agent. The objective of the agent is to maximize its expected cumulative reward in $n$ rounds.

We assume that arm $1$ is optimal, that is $\mu_1 > \max_{i > 1} \mu_i$. Let $\Delta_i = \mu_1 - \mu_i$ be the \emph{gap} of arm $i$, the difference in the expected rewards of arms $1$ and $i$. Then maximization of the expected $n$-round reward can be viewed as minimizing the \emph{expected $n$-round regret}, which we define as
\begin{align}
  R(n)
  = \sum_{i = 2}^K \Delta_i \E{\sum_{t = 1}^n \I{I_t = i}}\,.
  \label{eq:regret}
\end{align}

A \emph{contextual bandit} \cite{li10contextual,agrawal13thompson} is a generalization of a multi-armed bandit where the learning agent observes context $x_t \in \realset^d$ at the beginning of each round $t$. The reward of arm $i$ in round $t$ is drawn i.i.d.\ from a distribution that depends on both arm $i$ and $x_t$. One example is a logistic reward model, where
\begin{align}
  Y_{i, t}
  \sim \mathrm{Ber}(1 / (1 + \exp[- x_t\transpose \theta_i]))
  \label{eq:generalized linear model}
\end{align}
and $\theta_i \in \realset^d$ is the parameter vector associated with arm $i$. The learning agent does not know $\theta_i$.

If the reward $Y_{i, t}$ was generated as in \eqref{eq:generalized linear model}, we could solve our contextual bandit problem as a generalized linear bandit \cite{filippi10parametric}. However, if $Y_{i, t}$ was generated by a more complex function of context $x_t$, such as a neural network, we would not know how to design a sound bandit algorithm. The difficulty is not in modeling uncertainty; it is in the lack of computationally efficient methods to do it. For instance, Thompson sampling can be analyzed in very general settings \cite{gopalan14thompson}. However, as discussed in \cref{sec:introduction}, accurate posterior approximations are computationally expensive.

%!TEX root = Paper.tex

\section{General Randomized Exploration}
\label{sec:gre}

In this section, we present a general randomized algorithm that explores conditioned on its history and context. Later, in \cref{sec:giro,sec:analysis}, we propose and analyze an instance of this algorithm with sublinear regret.

We use the following notation. The \emph{history} of arm $i$ after $s$ pulls is a vector $\cH_{i, s}$ of length $s$. The $j$-th entry of $\cH_{i, s}$ is a pair $(x_\ell, Y_{i, \ell})$, where $\ell$ is the index of the round where arm $i$ is pulled for the $j$-th time. We define $\cH_{i, 0} = ()$. The \emph{number of pulls} of arm $i$ in the first $t$ rounds is denoted by $T_{i, t}$ and defined as $T_{i, t} = \sum_{\ell = 1}^t \I{I_\ell = i}$.

\begin{algorithm}[t]
  \caption{General randomized exploration.}
  \label{alg:gre}
  \begin{algorithmic}[1]
    \State $\forall i \in [K]: T_{i, 0} \gets 0, \, \cH_{i, 0} \gets ()$
    \Comment{Initialization}
    \For{$t = 1, \dots, n$}
      \For{$i = 1, \dots, K$}
      \Comment{Estimate arm values}
        \State $s \gets T_{i, t - 1}$
        \State Draw $\hat{\mu}_{i, t} \sim p(\cH_{i, s}, x_t)$
        \label{alg:gre:estimate}
      \EndFor
      \State $I_t \gets \argmax_{i \in [K]} \hat{\mu}_{i, t}$
      \Comment{Pulled arm}
      \label{alg:gre:pulled arm}
      \State Pull arm $I_t$ and observe $Y_{i, t}$
      \Statex \vspace{-0.05in}
      \For{$i = 1, \dots, K$}
      \Comment{Update statistics}
        \If{$i = I_t$}
          \State $T_{i, t} \gets T_{i, t - 1} + 1$
          \State $\cH_{i, T_{i, t}} \gets
          \cH_{i, T_{i, t - 1}} \oplus ((x_t, Y_{i, t}))$
          \label{alg:gre:update}
        \Else
          \State $T_{i, t} \gets T_{i, t - 1}$
        \EndIf
      \EndFor
    \EndFor
  \end{algorithmic}
\end{algorithm}

Our meta algorithm is presented in \cref{alg:gre}. In round $t$, the algorithm draws the value of each arm $i$, $\hat{\mu}_{i, t}$, from distribution $p$ (line \ref{alg:gre:estimate}), which depends on the history of the arm $\cH_{i, s}$ and the context $x_t$ in round $t$. The arm with the highest value is pulled (line \ref{alg:gre:pulled arm}), and its history is extended by a pair of context $x_t$ and the reward of arm $I_t$ (line \ref{alg:gre:update}). We denote by $u \oplus v$ the concatenation of vectors $u$ and $v$.

We present a general contextual algorithm because we return to it in \cref{sec:contextual giro,sec:contextual experiments}. For now, we restrict our attention to multi-armed bandits. To simplify exposition, we omit context from the entries of $\cH_{i, s}$.

\cref{alg:gre} is instantiated by designing the distribution $p$ in line \ref{alg:gre:estimate}. This distribution can be designed in many ways. In Bernoulli TS \cite{agrawal13further}, for instance, $p$ is a beta posterior distribution. A less direct approach to designing $p$ is to resample the history of rewards in each round, as in bootstrapping exploration (\cref{sec:related work}). More specifically, let $\cB_{i, s}$ be a \emph{non-parametric bootstrap sample} \cite{efron86bootstrap} of arm $i$ after $s$ pulls. We let $\cB_{i, s}$ be a vector of the same length as $\cH_{i, s}$ and assume that its entries are drawn with replacement from the entries of $\cH_{i, s}$ in each round. Then the value of arm $i$ in round $t$ is estimated as
\begin{align}
  \hat{\mu}_{i, t}
  = \frac{1}{\abs{\cB_{i, s}}} \sum_{y \in \mathcal{B}_{i, s}} y\,,
  \label{eq:follow the leader}
\end{align}
where $s = T_{i, t - 1}$. Note that we slightly abuse our notation and treat vectors as sets. In the next section, we show that this natural instance of \cref{alg:gre} can have linear regret.

\subsection{Linear Regret}
\label{sec:linear regret}

The variant of \cref{alg:gre} in \eqref{eq:follow the leader} can have linear regret in a Bernoulli bandit with $K = 2$ arms. More specifically, if the first reward of the optimal arm is $0$, its estimated value remains $0$ until the arm is pulled again, which may never happen if the estimated value of the other arm is positive. We formally state this result below.

Without loss of generality, let $\mu_1 > \mu_2$. \cref{alg:gre} is implemented as follows. Both arms are initially pulled once, arm $1$ in round $1$ and arm $2$ in round $2$. The value of arm $i$ in round $t$ is computed as in \eqref{eq:follow the leader}. If $\hat{\mu}_{1, t} = \hat{\mu}_{2, t}$, the tie is broken by a fixed rule that is chosen randomly in advance, as is common in multi-armed bandits. In particular, \cref{alg:gre} draws $Z \sim \mathrm{Ber}(1 / 2)$ before the start of round $1$. If $\hat{\mu}_{1, t} = \hat{\mu}_{2, t}$, $I_t = \I{Z = 1} + 1$. We bound the regret of this algorithm below.

\begin{lemma}
\label{lem:lower bound} In a Bernoulli bandit with $2$ arms, the expected $n$-round regret of the above variant of \cref{alg:gre} can be bounded from below as $R(n) \geq 0.5 \, (1 - \mu_1) \Delta_2 (n - 1)$.
\end{lemma}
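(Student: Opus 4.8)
The plan is to pin down an event of constant probability on which the suboptimal arm~$2$ is pulled in every round from round~$2$ onwards; a linear lower bound on $R(n)$ then follows immediately from the regret decomposition \eqref{eq:regret}, which for $K = 2$ reads $R(n) = \Delta_2 \, \E{\sum_{t = 1}^n \I{I_t = 2}}$.

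First I would fix the event
\[
  E = \set{Y_{1, 1} = 0} \cap \set{Z = 1}\,,
\]
where $Y_{1, 1}$ is the reward obtained when arm~$1$ is pulled in round~$1$ and $Z$ is the tie-breaking bit drawn before round~$1$. Since $Z$ is independent of the rewards and $Y_{1, 1} \sim \mathrm{Ber}(\mu_1)$, we have $\prob{E} = \tfrac12 (1 - \mu_1)$. I would then show, by induction on $t$, that on $E$ arm~$1$ is never pulled after round~$1$. The base cases are rounds~$1$ and~$2$, in which arms~$1$ and~$2$ are pulled by construction. For the inductive step, assume arm~$1$ has been pulled only in round~$1$ through the end of round $t - 1$, for some $t \ge 3$; then $T_{1, t - 1} = 1$ and, on $E$, $\cH_{1, 1} = (0)$, so the length-$1$ bootstrap sample $\cB_{1, 1}$ equals $(0)$ and hence $\hat{\mu}_{1, t} = 0$ by \eqref{eq:follow the leader}. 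Because $\hat{\mu}_{2, t} \in [0, 1]$, either $\hat{\mu}_{2, t} > 0$, in which case arm~$2$ is the strict maximizer, or $\hat{\mu}_{2, t} = 0 = \hat{\mu}_{1, t}$, in which case the tie is resolved in favor of arm~$2$ since $Z = 1$ on $E$; either way $I_t = 2$, completing the induction.

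Consequently, on $E$ we have $\sum_{t = 1}^n \I{I_t = 2} \ge n - 1$, and therefore
\begin{align*}
  R(n)
  &= \Delta_2 \, \E{\sum_{t = 1}^n \I{I_t = 2}}
   \ge \Delta_2 \, \E{\I{E} \sum_{t = 1}^n \I{I_t = 2}} \\
  &\ge \Delta_2 \, (n - 1) \, \prob{E}
   = 0.5 \, (1 - \mu_1) \, \Delta_2 \, (n - 1)\,,
\end{align*}
which is the claimed bound. The argument is short; the only point requiring care is the inductive step, namely ruling out that a vanishing bootstrap mean $\hat{\mu}_{2, t} = 0$ for arm~$2$ ever lets arm~$1$ be pulled again — and this is exactly why the event $E$ fixes the tie-breaking bit $Z = 1$ rather than only conditioning on $Y_{1, 1} = 0$.
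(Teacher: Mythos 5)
Your proof is correct and follows essentially the same route as the paper: the same event $E = \set{Z = 1,\ \cH_{1,1} = (0)}$ of probability $0.5\,(1-\mu_1)$, followed by the observation that arm $1$ is never pulled again on $E$ and the regret decomposition \eqref{eq:regret}. The only difference is that you spell out the inductive step (bootstrap sample of $(0)$ is $(0)$, ties broken toward arm $2$ when $Z = 1$) that the paper compresses into ``by design, the algorithm does not pull arm $1$ after $E$ occurs,'' which is a welcome addition rather than a deviation.
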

\begin{proof}
By design, the algorithm does not pull arm $1$ after event $E = \set{Z = 1, \, \cH_{1, 1} = (0)}$ occurs. Since the events $Z = 1$ and $\cH_{1, 1} = (0)$ are independent,
\begin{align*}
  \prob{E}
  = \prob{Z = 1} \prob{\cH_{1, 1} = (0)}
  = 0.5 \, (1 - \mu_1)\,.
\end{align*}
Moreover, if event $\cH_{1, 1} = (0)$ occurs, it must occur by the end of round $1$ because the algorithm pulls arm $1$ in round $1$. Now we combine the above two facts and get
\begin{align*}
  R(n)
  & \geq \E{\left(\sum_{t = 2}^n \Delta_2 \I{I_t = 2}\right) \I{E}} \\
  & = \condE{\sum_{t = 2}^n \Delta_2 \I{I_t = 2}}{E} \prob{E} \\
  & = 0.5 \, (1 - \mu_1) \Delta_2 (n - 1)\,.
\end{align*}
This concludes the proof.
\end{proof}

A similar lower bound, $R(n) \geq 0.5 \, (1 - \mu_1)^k \Delta_2 (n - k)$, can be derived in the setting where each arm is initialized by $k$ pulls. This form of forced exploration was proposed in bootstrapping bandits earlier \cite{tang15personalized,elmachtoub17practical}, and is clearly not sound. A similar argument to \cref{lem:lower bound}, although less formal, is in Section 3.1 of \citet{osband15bootstrapped}.

%!TEX root = Paper.tex

\section{Garbage In, Reward Out}
\label{sec:giro}

One solution to the issues in \cref{sec:linear regret} is to add positive and negative pseudo rewards, $1$ and $0$, to $\cH_{i, s}$. This increases the variance of the bootstrap mean in \eqref{eq:follow the leader} and may lead to exploration. However, the pseudo rewards also introduce bias that has to be controlled. In the next section, we present a design that trades off these two quantities.

\subsection{Algorithm $\giro$}
\label{sec:algorithm giro}

We propose \cref{alg:giro}, which increases the variance of the bootstrap mean in \eqref{eq:follow the leader} by adding pseudo rewards to the history of the pulled arm (line \ref{alg:giro:garbage}). The algorithm is called $\giro$, which stands for \emph{garbage in, reward out}. This is an informal description of our exploration strategy, which adds seemingly useless extreme rewards to the history of the pulled arm. We call them \emph{pseudo rewards}, to distinguish them from observed rewards. $\giro$ has one tunable parameter $a$, the number of positive and negative pseudo rewards in the history for each observed reward.

$\giro$ does not seem sound, because the number of pseudo rewards in history $\cH_{i, s}$ grows linearly with $s$. In fact, this is the key idea in our design. We justify it informally in the next section and bound its regret in \cref{sec:analysis}.

\subsection{Informal Justification}
\label{sec:informal justification}

In this section, we informally justify the design of $\giro$ in a Bernoulli bandit. Our argument has two parts. First, we show that $\hat{\mu}_{i, t}$ in line \ref{alg:giro:sampling} \emph{concentrates} at the scaled and shifted expected reward of arm $i$, which preserves the order of the arms. Second, we show that $\hat{\mu}_{i, t}$ is \emph{optimistic}, its value is higher than the scaled and shifted expected reward of arm $i$, with a sufficiently high probability. This is sufficient for the regret analysis in \cref{sec:analysis}.

Fix arm $i$ and the number of its pulls $s$. Let $V_{i, s}$ denote the number of ones in history $\cH_{i, s}$, which includes $a$ positive and negative pseudo rewards for each observed reward of arm $i$. By definition, $V_{i, s} - a s$ is the number of positive observed rewards of arm $i$. These rewards are drawn i.i.d.\ from $\mathrm{Ber}(\mu_i)$. Thus $V_{i, s} - a s \sim B(s, \mu_i)$ and we have
\begin{align}
  \E{V_{i, s}}
  = (\mu_i + a) s\,, \quad
  \var{V_{i, s}}
  = \mu_i (1 - \mu_i) s\,.
  \label{eq:history moments}
\end{align}
Now we define $\alpha = 2 a + 1$ and let $U_{i, s}$ be the number of ones in bootstrap sample $\cB_{i, s}$. Since drawing $\alpha s$ samples with replacement from $\cH_{i, s}$ is equivalent to drawing $\alpha s$ i.i.d.\ samples from $\mathrm{Ber}(V_{i, s} / (\alpha s))$, we have $U_{i, s} \mid V_{i, s} \sim B(\alpha s, V_{i, s} / (\alpha s))$. From the definition of $U_{i, s}$, we have for any $V_{i, s}$,
\begin{align}
  \condE{U_{i, s}}{V_{i, s}}
  & = V_{i, s}\,,
  \label{eq:bootstrap mean} \\
  \condvar{U_{i, s}}{V_{i, s}}
  & = V_{i, s} \left(1 - \frac{V_{i, s}}{\alpha s}\right)\,.
  \label{eq:bootstrap variance}
\end{align}
Let $\hat{\mu} = U_{i, s} / (\alpha s)$ be the mean reward in $\cB_{i, s}$. First, we argue that $\hat{\mu}$ concentrates. From the properties of $U_{i, s}$ in \eqref{eq:bootstrap mean} and \eqref{eq:bootstrap variance}, for any $V_{i, s}$, we have
\begin{align*}
  \condE{\hat{\mu}}{V_{i, s}}
  = \frac{V_{i, s}}{\alpha s}\,, \quad
  \condvar{\hat{\mu}}{V_{i, s}}
  = \frac{V_{i, s}}{\alpha^2 s^2}
  \left(1 - \frac{V_{i, s}}{\alpha s}\right)\,.
\end{align*}
Since $V_{i, s} \in [a s, (a + 1) s]$, $\condvar{\hat{\mu}}{V_{i, s}} = O(1 / s)$ and $\hat{\mu} \to V_{i, s} / (\alpha s)$ as $s$ increases. Moreover, from the properties of $V_{i, s}$ in \eqref{eq:history moments}, we have
\begin{align*}
  \E{\frac{V_{i, s}}{\alpha s}}
  = \frac{\mu_i + a}{\alpha}\,, \quad
  \var{\frac{V_{i, s}}{\alpha s}}
  = \frac{\mu_i (1 - \mu_i)}{\alpha^2 s}\,.
\end{align*}
So, $V_{i, s} / (\alpha s) \to (\mu_i + a) / \alpha$ as $s$ increases. By transitivity, $\hat{\mu} \to (\mu_i + a) / \alpha$, which is the scaled and shifted expected reward of arm $i$. This transformation preserves the order of the arms; but it changes the gaps, the differences in the expected rewards of the optimal and suboptimal arms.

\begin{algorithm}[t]
  \caption{$\giro$ with $[0, 1]$ rewards.}
  \label{alg:giro}
  \begin{algorithmic}[1]
    \State \textbf{Inputs:} Pseudo rewards per unit of history $a$
    \Statex \vspace{-0.05in}
    \State $\forall i \in [K]: T_{i, 0} \gets 0, \, \cH_{i, 0} \gets ()$
    \Comment{Initialization}
    \For{$t = 1, \dots, n$}
      \For{$i = 1, \dots, K$}
      \Comment{Estimate arm values}
        \If{$T_{i, t - 1} > 0$}
          \State $s \gets T_{i, t - 1}$
          \State $\cB_{i, s} \gets
          \text{Sample $\abs{\cH_{i, s}}$ times from $\cH_{i, s}$}$
          \Statex \hspace{1.03in} with replacement
          \State $\displaystyle \hat{\mu}_{i, t} \gets
          \frac{1}{\abs{\cB_{i, s}}} \sum_{y \in \cB_{i, s}} y$
          \label{alg:giro:sampling}
        \Else
          \State $\hat{\mu}_{i, t} \gets + \infty$
          \label{alg:giro:initialization}
        \EndIf
      \EndFor
      \State $I_t \gets \argmax_{i \in [K]} \hat{\mu}_{i, t}$
      \Comment{Pulled arm}
      \State Pull arm $I_t$ and get reward $Y_{I_t, t}$
      \Statex \vspace{-0.05in}
      \For{$i = 1, \dots, K$}
      \Comment{Update statistics}
        \If{$i = I_t$}
          \State $T_{i, t} \gets T_{i, t - 1} + 1$
          \State $\cH_{i, T_{i, t}} \gets
          \cH_{i, T_{i, t - 1}} \oplus (Y_{i, t})$
          \For{$\ell = 1, \dots, a$}
          \Comment{Pseudo rewards}
            \State $\cH_{i, T_{i, t}} \gets
            \cH_{i, T_{i, t}} \oplus (0, 1)$
            \label{alg:giro:garbage}
          \EndFor
        \Else
          \State $T_{i, t} \gets T_{i, t - 1}$
        \EndIf
      \EndFor
    \EndFor
  \end{algorithmic}
\end{algorithm}

Second, we argue that $\hat{\mu}$ is \emph{optimistic}, that any unfavorable history is less likely than being optimistic under that history. In particular, let $E = \set{V_{i, s} / (\alpha s) = (\mu_i + a) / \alpha - \eps}$ be the event that the mean reward in the history deviates from its expectation by $\eps > 0$. Then
\begin{align}
  \condprob{\hat{\mu} \geq (\mu_i + a) / \alpha}{E}
  \geq \prob{E}
  \label{eq:informal optimism}
\end{align}
holds for any $\eps > 0$ such that $\prob{E} > 0$.

Trivially, $\prob{E} \leq \prob{V_{i, s} / (\alpha s) \leq (\mu_i + a) / \alpha - \eps}$ holds for any $\eps > 0$. Therefore, if both $V_{i, s} / (\alpha s)$ and $\hat{\mu} \mid V_{i, s}$ were normally distributed, inequality \eqref{eq:informal optimism} would hold if for any $V_{i, s}$, $\condvar{\hat{\mu}}{V_{i, s}} \geq \var{V_{i, s} / (\alpha s)}$. We compare the variances below. Since $V_{i, s} \in [a s, (a + 1) s]$,
\begin{align*}
  \condvar{\hat{\mu}}{V_{i, s}}
  & \geq \frac{1}{\alpha s} \min_{v \in [a s, (a + 1) s]}
  \frac{v}{\alpha s} \left(1 - \frac{v}{\alpha s}\right) \\
  & = \frac{1}{\alpha s} \frac{a s}{\alpha s} \left(1 - \frac{a s}{\alpha s}\right)
  = \frac{a (a + 1)}{\alpha^3 s}\,.
\end{align*}
Trivially, $\var{V_{i, s} / (\alpha s)} \leq 1 / (4 \alpha^2 s)$. Therefore, for any $a$ such that $a (a + 1) / \alpha \geq 1 / 4$, roughly $a \geq 1 / 3$, $\hat{\mu}$ is optimistic. We formalize this intuition in \cref{sec:analysis}. A formal proof is necessary because our assumption of normality was unrealistic.

\subsection{Contextual $\giro$}
\label{sec:contextual giro}

We generalize $\giro$ to a contextual bandit in \cref{alg:contextual giro}. The main difference in \cref{alg:contextual giro} is that it fits a reward generalization model to $\cB_{i, s}$ and then estimates the value of arm $i$ in context $x_t$ (line \ref{alg:contextual giro:estimate}) based on this model. If this model was linear with parameters $\theta_i$, the estimated value would be $x_t\transpose \theta_i$. The other difference is that the pseudo rewards are associated with context $x_t$ (line \ref{alg:contextual giro:garbage}), to increase the conditional variance of the estimates given $x_t$.

The value of arm $i$ in round $t$, $\condE{Y_{i, t}}{\cB_{i, s}, x_t}$, can be approximated by any function of $x_t$ that can be learned from $\cB_{i, s}$. The approximation should permit any constant shift of any representable function. In linear models, this can be achieved by adding a bias term to $x_t$. We experiment with multiple reward generalization models in \cref{sec:contextual experiments}.

\begin{algorithm}[t]
  \caption{Contextual $\giro$ with $[0, 1]$ rewards.}
  \label{alg:contextual giro}
  \begin{algorithmic}[1]
    \State \textbf{Inputs:} Pseudo rewards per unit of history $a$
    \Statex \vspace{-0.05in}
    \State $\forall i \in [K]: T_{i, 0} \gets 0, \, \cH_{i, 0} \gets ()$
    \Comment{Initialization}
    \For{$t = 1, \dots, n$}
      \For{$i = 1, \dots, K$}
      \Comment{Estimate arm values}
        \If{$T_{i, t - 1} > 0$}
          \State $s \gets T_{i, t - 1}$
          \State $\cB_{i, s} \gets
          \text{Sample $\abs{\cH_{i, s}}$ times from $\cH_{i, s}$}$
          \Statex \hspace{1.03in} with replacement
          \State $\hat{\mu}_{i, t} \gets
          \text{Estimate $\condE{Y_{i, t}}{\cB_{i, s}, x_t}$}$
          \label{alg:contextual giro:estimate}
        \Else
          \State $\hat{\mu}_{i, t} \gets + \infty$
        \EndIf
      \EndFor
      \State $I_t \gets \argmax_{i \in [K]} \hat{\mu}_{i, t}$
      \Comment{Pulled arm}
      \State Pull arm $I_t$ and get reward $Y_{I_t, t}$
      \Statex \vspace{-0.05in}
      \For{$i = 1, \dots, K$}
      \Comment{Update statistics}
        \If{$i = I_t$}
          \State $T_{i, t} \gets T_{i, t - 1} + 1$
          \State $\cH_{i, T_{i, t}} \gets
          \cH_{i, T_{i, t - 1}} \oplus ((x_t, Y_{i, t}))$
          \For{$\ell = 1, \dots, a$}
          \Comment{Pseudo rewards}
            \State $\cH_{i, T_{i, t}} \gets
            \cH_{i, T_{i, t}} \oplus ((x_t, 0), (x_t, 1))$
            \label{alg:contextual giro:garbage}
          \EndFor
        \Else
          \State $T_{i, t} \gets T_{i, t - 1}$
        \EndIf
      \EndFor
    \EndFor
  \end{algorithmic}
\end{algorithm}

%!TEX root = Paper.tex

\section{Analysis}
\label{sec:analysis}

In \cref{sec:gre analysis}, we prove an upper bound on the expected $n$-round regret of \cref{alg:gre}. In \cref{sec:giro analysis}, we prove an upper bound on the expected $n$-round regret of $\giro$ in a Bernoulli bandit. In \cref{sec:discussion}, we discuss the results of our analysis. Note that our analysis is in the multi-armed bandit setting.

\subsection{General Randomized Exploration}
\label{sec:gre analysis}

We prove an upper bound on the regret of \cref{alg:gre} in a multi-armed bandit with $K$ arms below. The setting and regret are formally defined in \cref{sec:setting}. The distribution $p$ in line \ref{alg:gre:estimate} of \cref{alg:gre} is a function of the history of the arm. For $s \in [n] \cup \set{0}$, let
\begin{align}
  Q_{i, s}(\tau)
  = \condprob{\hat{\mu} \geq \tau}{\hat{\mu} \sim p(\cH_{i, s}), \, \cH_{i, s}}
  \label{eq:optimism}
\end{align}
be the tail probability that $\hat{\mu}$ conditioned on history $\cH_{i, s}$ is at least $\tau$ for some tunable parameter $\tau$.

\begin{theorem}
\label{thm:gre upper bound} For any tunable parameters $(\tau_i)_{i = 2}^K \in \realset^{K - 1}$, the expected $n$-round regret of \cref{alg:gre} can be bounded from above as $R(n) \leq \sum_{i = 2}^K \Delta_i (a_i + b_i)$, where
\begin{align*}
  a_i
  & = \sum_{s = 0}^{n - 1} \E{\min \set{1 / Q_{1, s}(\tau_i) - 1, n}}\,, \\
  b_i
  & = \sum_{s = 0}^{n - 1} \prob{Q_{i, s}(\tau_i) > 1 / n} + 1\,.
\end{align*}
\end{theorem}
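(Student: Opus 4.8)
The plan is to bound $\E{T_{i,n}}$ for each suboptimal arm $i$, where $T_{i,n}=\sum_{t=1}^n\I{I_t=i}$, and then combine via $R(n)=\sum_{i=2}^K\Delta_i\E{T_{i,n}}$; so fix $i\ge 2$ and the parameter $\tau_i$. Since $I_t=i$ forces $\hat{\mu}_{i,t}=\max_j\hat{\mu}_{j,t}$, I split each pull of arm $i$ according to whether its own sample is optimistic:
\[
  T_{i,n}
  = \underbrace{\sum_{t=1}^n \I{I_t = i, \, \hat{\mu}_{i, t} \ge \tau_i}}_{S_i}
  + \underbrace{\sum_{t=1}^n \I{I_t = i, \, \hat{\mu}_{i, t} < \tau_i}}_{S_i'} ,
\]
and I will show $\E{S_i}\le b_i$ and $\E{S_i'}\le a_i$.

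For $\E{S_i}\le b_i$ I group the pulls of arm $i$ by its current number of pulls $s=T_{i,t-1}$. On the contiguous block of rounds where arm $i$ has exactly $s$ pulls the history $\cH_{i,s}$ is frozen, so $\condprob{\hat{\mu}_{i,t}\ge\tau_i}{\cH_{i,s}}=Q_{i,s}(\tau_i)$ conditionally on the past in each such round, and arm $i$ is pulled at most once in the block, so the block's contribution to $S_i$ is at most $\min\set{1,X_s}$, where $X_s$ is the number of rounds of the block with $\hat{\mu}_{i,t}\ge\tau_i$. A conditional-expectation (optional-stopping) computation gives $\condE{X_s}{\text{history at the start of the block}}=Q_{i,s}(\tau_i)\cdot\condE{N_{i,s}}{\cdot}$, where $N_{i,s}$ is the block length. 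Splitting on whether $Q_{i,s}(\tau_i)>1/n$: the ``large'' blocks contribute at most $1$ each, i.e.\ $\sum_s\prob{Q_{i,s}(\tau_i)>1/n}$ in expectation, while the ``small'' ones contribute in total at most $\sum_s\E{N_{i,s}}/n=n/n=1$ because $\sum_{s\ge 0}N_{i,s}=n$. Summing $s$ over $\set{0,\dots,n-1}$ gives $\E{S_i}\le b_i$.

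For $\E{S_i'}\le a_i$ (the main step), note that if $I_t=i$ and $\hat{\mu}_{i,t}<\tau_i$ then \emph{every} sample that round is below $\tau_i$, in particular $\hat{\mu}_{1,t}<\tau_i$; such rounds happen only while the optimal arm is ``unlucky.'' I group by the optimal arm's pull count $s=T_{1,t-1}$, freeze $\cH_{1,s}$, and write $q_s=Q_{1,s}(\tau_i)$. Within this block call a round \emph{active} if arm $i$ would be selected among the non-optimal arms and $\hat{\mu}_{i,t}<\tau_i$; these are exactly the rounds of the block that can contribute to $S_i'$. In an active round only arm $i$ and arm $1$ are candidates for $I_t$, so, revealing all non-optimal samples before drawing the fresh sample $\hat{\mu}_{1,t}\sim p(\cH_{1,s})$: arm $i$ is pulled iff $\hat{\mu}_{1,t}$ loses to $\hat{\mu}_{i,t}$, which (since $\hat{\mu}_{i,t}<\tau_i$) has conditional probability at most $\condprob{\hat{\mu}_{1,t}<\tau_i}{\cH_{1,s}}=1-q_s$, and otherwise arm $1$ is pulled, with conditional probability at least $q_s$ — the two cases partition the active rounds. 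The block ends exactly when arm $1$ is pulled, so the number of contributing rounds equals the number of ``arm $i$'' outcomes among active rounds before the first ``arm $1$'' outcome; since each active round is an ``arm $1$'' outcome with conditional probability at least $q_s$, this count is stochastically dominated by a geometric variable with mean $1/q_s-1$, and it is also at most the block length, hence at most $n$. Thus the conditional expected contribution of this block is at most $\min\set{1/q_s-1,n}$; taking expectations over $\cH_{1,s}$ and summing $s$ over $\set{0,\dots,n-1}$ gives $\E{S_i'}\le a_i$.

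Combining, $\E{T_{i,n}}\le a_i+b_i$, whence $R(n)\le\sum_{i=2}^K\Delta_i(a_i+b_i)$. The delicate point throughout — and the place where care is most needed — is the conditioning: in round $t$ one must work with the filtration before round $t$ augmented by the samples of all arms \emph{other} than the one whose fresh draw is being analyzed, so that the dependence between block lengths and samples is handled by the optional-stopping/domination argument rather than by pretending block lengths are independent of the draws. A minor additional nuisance is ties and, when $p(\cH_{i,s})$ is not continuous, the gap between $\prob{\hat{\mu}<\tau_i}$ and $1-Q_{i,s}(\tau_i)$, which only forces a harmless replacement of some strict inequalities by non-strict ones.
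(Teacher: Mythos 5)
Your proposal is correct and follows essentially the same route as the paper: the same decomposition of $T_{i,n}$ by whether $\hat{\mu}_{i,t}$ clears $\tau_i$, the same once-per-pull-count plus $1/n$-per-round accounting for $b_i$, and the same grouping by $T_{1,t-1}$ with a geometric/optional-stopping argument (conditioning on $\cH_{1,s}$ and the other arms' fresh draws) for $a_i$. The paper formalizes that last step with an explicit i.i.d.-sequence construction and Doob's optional-sampling theorem, but the content matches your stochastic-domination argument.
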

\begin{proof}
A detailed proof is in \cref{sec:gre proof}. It is motivated by the proof of Thompson sampling \cite{agrawal13further}. Our main contribution is that we state and prove the claim such that it can be reused for the regret analysis of any sampling distribution in \cref{alg:gre}.
\end{proof}

\cref{thm:gre upper bound} says that the regret of \cref{alg:gre} is low when $Q_{i, s}(\tau_i) \to 0$ and $Q_{1, s}(\tau_i) \to 1$ as $s \to \infty$. This suggests the following setting of the tunable parameter $\tau_i$ for $i > 1$. When $p(\cH_{i, s})$ concentrates at $\mu_i$ and $\mu_i < \mu_1$, $\tau_i$ should be chosen from interval $(\mu_i, \mu_1)$. Then $Q_{i, s}(\tau_i) \to 0$ and $Q_{1, s}(\tau_i) \to 1$ would follow by concentration as $s \to \infty$. In general, when $p(\cH_{i, s})$ concentrates at $\mu_i'$ and $\mu_i' < \mu_1'$, $\tau_i$ should be chosen from interval $(\mu_i', \mu_1')$.

\subsection{Bernoulli $\giro$}
\label{sec:giro analysis}

We analyze $\giro$ in a $K$-armed Bernoulli bandit. We make an additional assumption over \cref{sec:gre analysis} that the rewards are binary. Our regret bound is stated below. 

\begin{theorem}
\label{thm:giro upper bound} For any $a > 1 / \sqrt{2}$, the expected $n$-round regret of $\giro$ is bounded from above as
\begin{align}
  R(n)
  \leq \sum_{i = 2}^K \Delta_i
  \Bigg[& \underbrace{\left(\frac{16 (2 a + 1) c}{\Delta_i^2} \log n + 2\right)}_
  {\text{\emph{Upper bound on $a_i$ in \cref{thm:gre upper bound}}}} + {}
  \label{eq:upper bound} \\
  & \underbrace{\left(\frac{8 (2 a + 1)}{\Delta_i^2} \log n + 2\right)}_
  {\text{\emph{Upper bound on $b_i$ in \cref{thm:gre upper bound}}}}\Bigg]\,,
  \nonumber
\end{align}
where $b = (2 a + 1) / [a (a + 1)]$ and
\begin{align*}
  c
  = \frac{2 e^2 \sqrt{2 a + 1}}{\sqrt{2 \pi}} \exp\left[\frac{8 b}{2 - b}\right]
  \left(1 + \sqrt{\frac{2 \pi}{4 - 2 b}}\right)
  \nonumber
\end{align*}
is an upper bound on the expected inverse probability of being optimistic, which is derived in \cref{sec:optimism upper bound}.

\end{theorem}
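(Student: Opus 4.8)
The plan is to instantiate \cref{thm:gre upper bound} and, for each suboptimal arm $i$, bound $a_i$ and $b_i$ separately. Write $\alpha = 2a+1$ and recall from \cref{sec:informal justification} that the bootstrap value of arm $j$ concentrates at the shifted mean $\mu_j' = (\mu_j + a)/\alpha$, with the gap shrinking from $\Delta_i$ to $\Delta_i/\alpha$. I would take the free parameter to be the midpoint $\tau_i = (\mu_1' + \mu_i')/2$, so that $\mu_1' - \tau_i = \tau_i - \mu_i' = \Delta_i/(2\alpha) =: \gamma_i$. Then $b_i$ controls how often the suboptimal arm's bootstrap overshoots $\tau_i$, and $a_i$ controls how rarely the optimal arm's bootstrap overshoots $\tau_i$; the latter is exactly where the optimism estimate from \cref{sec:optimism upper bound} enters.

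For $b_i$: condition on $V_{i,s}$, the number of ones in $\cH_{i,s}$. Given the history, $\hat\mu$ is an average of $\alpha s$ i.i.d.\ $\mathrm{Ber}(V_{i,s}/(\alpha s))$ draws, so Hoeffding gives $Q_{i,s}(\tau_i) \le \exp[-2\alpha s(\tau_i - V_{i,s}/(\alpha s))^2]$ whenever $V_{i,s}/(\alpha s) \le \tau_i$. Since $V_{i,s}/(\alpha s) - \mu_i' = (\hat p_i - \mu_i)/\alpha$ for the empirical reward mean $\hat p_i$, a second Hoeffding bound shows $V_{i,s}/(\alpha s) > \tau_i - \gamma_i/2$ with probability at most $\exp[-s\Delta_i^2/8]$. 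Hence for $s \ge s_0 := \ceils{8\alpha \Delta_i^{-2}\log n}$, on the typical event $V_{i,s}/(\alpha s) \le \tau_i - \gamma_i/2$ we get $Q_{i,s}(\tau_i) \le \exp[-\alpha s\gamma_i^2/2] \le 1/n$, so $\prob{Q_{i,s}(\tau_i) > 1/n} \le \exp[-s\Delta_i^2/8]$; for $s < s_0$ I bound this probability by $1$. Summing the geometric tail gives $b_i \le 8\alpha\Delta_i^{-2}\log n + 2$, matching the stated bound.

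For $a_i$: the $s=0$ term vanishes because line~\ref{alg:giro:initialization} sets $\hat\mu_{1,1} = +\infty$, so $Q_{1,0}(\tau_i) = 1$. Split the remaining sum at $s_1 := \ceils{16\alpha\Delta_i^{-2}\log n}$. For $s < s_1$, use $\min\set{1/Q_{1,s}(\tau_i) - 1, n} \le 1/Q_{1,s}(\tau_i) \le 1/Q_{1,s}(\mu_1')$ (monotonicity of the tail and $\tau_i \le \mu_1'$) together with the estimate $\E{1/Q_{1,s}(\mu_1')} \le c$ from \cref{sec:optimism upper bound}; this contributes at most $c\,s_1$. For $s \ge s_1$, condition on $V_{1,s}$ again: Hoeffding on the bootstrap gives $1/Q_{1,s}(\tau_i) - 1 \le 2\exp[-\alpha s\gamma_i^2/2]$ on the event $V_{1,s}/(\alpha s) \ge \mu_1' - \gamma_i/2$, whose complement has probability at most $\exp[-s\Delta_i^2/8]$ and on which I use the cap $n$; at $s \ge s_1$ both exponential factors are polynomially small in $n$ (the assumption $a > 1/\sqrt 2$ keeps $n\exp[-s_1\Delta_i^2/8]$ summable), so the tail contributes $O(1)$. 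Thus $a_i \le 16\alpha c\,\Delta_i^{-2}\log n + 2$, and substituting both bounds into $R(n) \le \sum_{i=2}^K \Delta_i(a_i+b_i)$ yields \eqref{eq:upper bound}.

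The main obstacle is the black-box estimate $\E{1/Q_{1,s}(\mu_1')} \le c$: the claim that, in expectation over a random history, the optimal arm's bootstrap mean exceeds its own shifted mean with probability bounded away from zero, uniformly in $s$. The naive argument from the heuristic inequality \eqref{eq:informal optimism} only yields $O(\alpha s)$, which is useless here; the real proof (deferred to \cref{sec:optimism upper bound}) needs a genuine anti-concentration lower bound on the bootstrap mean (a reverse binomial-tail estimate), a local-limit-type estimate for the distribution of $V_{1,s}$, and the variance-inflation condition $a > 1/\sqrt 2$, equivalently $b < 2$, which is precisely what makes the net exponent favourable and keeps the Gaussian-type integrals defining $c$ finite. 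Everything outside that lemma is routine Hoeffding bookkeeping.
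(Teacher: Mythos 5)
Your proposal is correct and follows essentially the same route as the paper's proof: instantiate \cref{thm:gre upper bound} with $\tau_i$ the midpoint of the shifted means $(\mu_1+a)/\alpha$ and $(\mu_i+a)/\alpha$, split each of $a_i$ and $b_i$ at a threshold of order $\alpha\Delta_i^{-2}\log n$, handle large $s$ by two Hoeffding bounds (one on $V_{\cdot,s}$ and one on $U_{\cdot,s}$ conditioned on $V_{\cdot,s}$), bound small $s$ trivially by $1$ for $b_i$ and by the optimism constant $c$ from \cref{thm:optimism upper bound} for $a_i$. The only nit is your aside that $a>1/\sqrt2$ is needed for summability of the large-$s$ tail (it is not; any $a>0$ suffices there) — but you correctly identify its real role, namely forcing $b<2$ so that $c$ is finite, at the end.
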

\begin{proof}
The claim is proved in \cref{sec:giro proof}. The key steps in the analysis are outlined below.
\end{proof}

Since $\giro$ is an instance of \cref{alg:gre}, we prove \cref{thm:giro upper bound} using \cref{thm:gre upper bound}, where we instantiate distribution specific artifacts. In the notation of \cref{sec:gre analysis}, $Q_{i, s}(\tau)$ in \eqref{eq:optimism} is the probability that the mean reward in the bootstrap sample $\cB_{i, s}$ of arm $i$ after $s$ pulls is at least $\tau$, conditioned on history $\cH_{i, s}$.

Fix any suboptimal arm $i$. Based on \cref{sec:informal justification}, the mean reward in $\cB_{i, s}$ concentrates at $\mu_i' = (\mu_i + a) / \alpha$, where $\alpha = 2 a + 1$. Following the discussion on the choice of $\tau_i$ in \cref{sec:gre analysis}, we set $\tau_i = (\mu_i + a) / \alpha + \Delta_i / (2 \alpha)$, which is the average of $\mu_i'$ and $\mu_1'$. Recall that $V_{i, s}$ is the number of ones in history $\cH_{i, s}$ and $U_{i, s}$ is the number of ones in its bootstrap sample $\cB_{i, s}$. Then $Q_{i, s}(\tau_i)$ can be written as
\begin{align*}
  Q_{i, s}(\tau_i)
  & = \condprob{\frac{U_{i, s}}{\alpha s} \geq
  \frac{\mu_i + a}{\alpha} + \frac{\Delta_i}{2 \alpha}}{V_{i, s}}
  \text{ for } s > 0\,, \\
  Q_{i, 0}(\tau_i)
  & = 1\,,
\end{align*}
where $Q_{i, 0}(\tau_i) = 1$ because of the initialization in line \ref{alg:giro:initialization} of $\giro$. We define $Q_{1, s}(\tau_i)$ analogously, by replacing $U_{i, s}$ with $U_{1, s}$ and $V_{i, s}$ with $V_{1, s}$.

The bound in \cref{thm:giro upper bound} is proved as follows. To simplify notation, we introduce $a_{i, s} = \E{\min \set{1 / Q_{1, s}(\tau_i) - 1, n}}$ and $b_{i, s} = \prob{Q_{i, s}(\tau_i) > 1 / n}$. By concentration, $a_{i, s}$ and $b_{i, s}$ are small when the number of pulls $s$ is large, on the order of $\Delta_i^{-2} \log n$. When the number of pulls $s$ is small, we bound $b_{i, s}$ trivially by $1$ and $a_{i, s}$ as
\begin{align*}
  a_{i, s}
  \leq \E{1 / \condprob{U_{1, s} \geq (\mu_1 + a) s}{V_{1, s}}}\,,
\end{align*}
where the right-hand side is bounded in \cref{sec:optimism upper bound}. The analysis in \cref{sec:optimism upper bound} is novel and shows that sampling from a binomial distribution with pseudo rewards is sufficiently optimistic.

\subsection{Discussion}
\label{sec:discussion}

The regret of $\giro$ is bounded from above in \cref{thm:giro upper bound}. Our bound is $O(K \Delta^{-1} \log n)$, where $K$ is the number of arms, $\Delta = \min_{i > 1} \Delta_i$ is the minimum gap, and $n$ is the number of rounds. The bound matches the regret bound of $\ucb$ in all quantities of interest.

We would like to discuss constants $a$ and $c$ in \cref{thm:giro upper bound}. The bound increases with the number of pseudo rewards $a$. This is expected, because $a$ positive and negative pseudo rewards yield $2 a + 1$ times smaller gaps than in the original problem (\cref{sec:informal justification}). The benefit is that exploration becomes easy. Although the gaps are smaller, $\giro$ outperforms $\ucb$ in all experiments in \cref{sec:mab experiments}, even for $a = 1$. The experiments also show that the regret of $\giro$ increases with $a$, as suggested by our bound.

The constant $c$ in \cref{thm:giro upper bound} is defined for all $a > 1 / \sqrt{2}$ and can be large due to the term $f(b) = \exp[8 b/ (2 - b)]$. For instance, for $a = 1$, $b = 3 / 2$ and $f(b) \approx e^{24}$. Fortunately, $f(b)$ decreases quickly as $a$ increases. For $a = 2$, $b = 5 / 6$ and $f(b) \approx e^{5.7}$; and for $a = 3$, $b = 7 / 12$ and $f(b) \approx e^{3.3}$. Because $\giro$ performs well empirically, as shown in \cref{sec:mab experiments}, the theoretically-suggested value of $c$ is likely to be loose.

%!TEX root = Paper.tex

\section{Experiments}
\label{sec:experiments}

We conduct two experiments. In \cref{sec:mab experiments}, we evaluate $\giro$ on multi-armed bandit problems. In \cref{sec:contextual experiments}, we evaluate $\giro$ in the contextual bandit setting.

\begin{figure*}[t]
  \centering
  \includegraphics[width=6.75in]{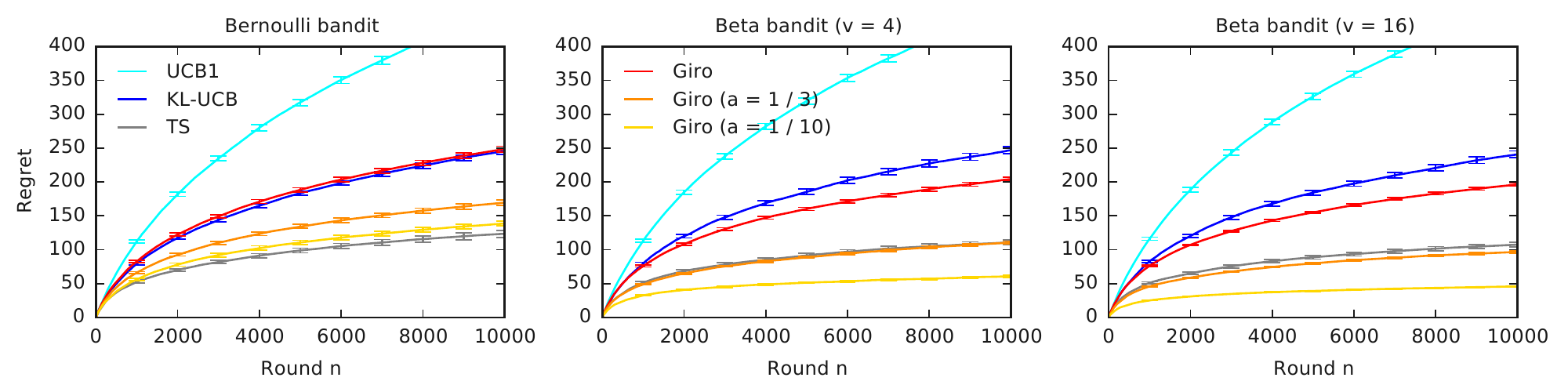} \\
  \vspace{-0.1in}
  \caption{Comparison of $\giro$ to $\ucb$, $\klucb$, and $\ts$ on three multi-armed bandit problems in \cref{sec:mab experiments}. The regret is reported as a function of round $n$. The results are averaged over $100$ runs. To reduce clutter, the legend is split between the first two plots.}
  \label{fig:mab results}
\end{figure*}

\begin{figure*}[t]
  \centering
  \includegraphics[width=6.75in]{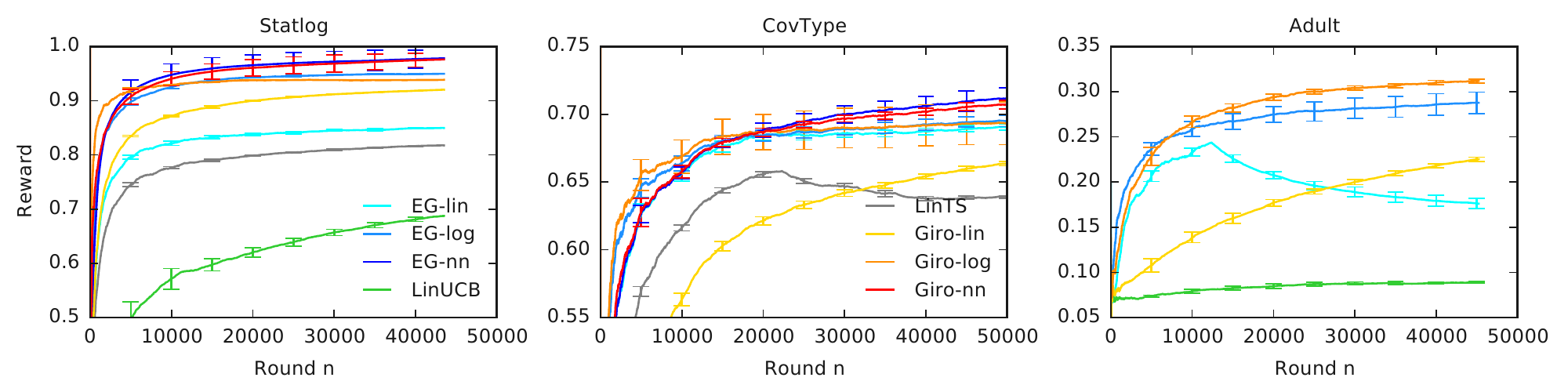} \\
  \vspace{-0.1in}
  \caption{Comparison of $\giro$ to $\linucb$, $\lints$, and the $\epsilon$-greedy policy on three contextual problems in \cref{sec:contextual experiments}. The reward is reported as a function of round $n$. The results are averaged over $5$ runs. To reduce clutter, the legend is split between the first two plots.}
\label{fig:contextual results}
\end{figure*}

\subsection{Multi-Armed Bandit}
\label{sec:mab experiments}

We run $\giro$ with three different values of $a$: $1$, $1 / 3$, and $1 / 10$. Our regret analysis in \cref{sec:discussion} justifies the value of $a = 1$. The informal argument in \cref{sec:informal justification} suggests a less conservative value of $a = 1 / 3$. We implement $\giro$ with real $a > 0$ as follows. For each arm $i$, we have two histories after $s$ pulls, with $\floors{a s}$ and $\ceils{a s}$ pseudo rewards of each kind. The value of $\hat{\mu}_{i, t}$ is estimated from the $\ceils{a s}$ and $\floors{a s}$ histories with probabilities $a s - \floors{a s}$ and $\ceils{a s} - a s$, respectively. This interpolation seems natural.

$\giro$ is compared to $\ucb$ \cite{auer02finitetime}, Bernoulli $\ts$ \cite{agrawal13further}, and $\klucb$ \cite{garivier11klucb}. The prior in Bernoulli $\ts$ is $\mathrm{Beta}(1, 1)$. We implement $\ts$ and $\klucb$ with $[0, 1]$ rewards as described in \citet{agrawal13further}. Specifically, for any $Y_{i, t} \in [0, 1]$, we draw pseudo reward $\hat{Y}_{i, t} \sim \mathrm{Ber}(Y_{i, t})$ and use it instead of $Y_{i, t}$.

We experiment with two classes of $K$-armed bandit problems where the reward distribution $P_i$ of arm $i$ is parameterized by its expected reward $\mu_i \in [0, 1]$. The first class is a Bernoulli bandit, where $P_i = \mathrm{Ber}(\mu_i)$. Both $\klucb$ and $\ts$ are near optimal in this class. The second class is a beta bandit, where $P_i = \mathrm{Beta}(v \mu_i, v (1 - \mu_i))$ for $v \geq 1$. Both $\klucb$ and $\ts$ can solve such problems, but are not statistically optimal anymore. We experiment with $v = 4$, which leads to rewards of higher variances; and $v = 16$, which leads to rewards of lower variances. The number of arms is $K = 10$ and their means are chosen uniformly at random from $[0.25, 0.75]$. The horizon is $n = 10$k rounds.

Our results are reported in \cref{fig:mab results}. The strong empirical performance of $\giro$ is apparent. $\giro$ outperforms $\ucb$ in all problems and for all values of $a$. It also outperforms $\klucb$ in the beta bandit for all values of $a$. Finally, $\giro$ outperforms $\ts$ in the beta bandit for $a = 1 / 10$. Although this setting of $\giro$ is purely heuristic, it shows the potential of our proposed method.

The goal of our experiments was to show that $\giro$ performs well in general, not that it outperforms near-optimal algorithms for well-established classes of bandit problems. $\giro$ is meant to be general, as shown in \cref{sec:contextual experiments}.

\subsection{Contextual Bandit}
\label{sec:contextual experiments}

We conduct contextual bandit experiments on multi-class classification problems \cite{agarwal14taming,elmachtoub17practical,riquelme18deep}, where arm $i \in [K]$ corresponds to class $i$. In round $t$, the algorithm observes context $x_t \in \realset^d$ and pulls an arm. It receives a reward of one if the pulled arm is the correct class, and zero otherwise. Each arm maintains independent statistics that map context $x_t$ to a binary reward. We use three datasets from \citet{riquelme18deep}: Adult ($d = 94, K = 14$), Statlog ($d = 9, K = 7$), and CovType ($d = 54, K = 7$).

The horizon is $n = 50$k rounds and we average our results over $5$ runs. $\giro$ is compared to $\linucb$ \cite{abbasi-yadkori11improved}, $\lints$ \cite{agrawal13thompson}, and the $\epsilon$-greedy policy ($\egreedy$) \cite{auer02finitetime}. We also implemented $\ucbglm$ \cite{li17provably}. $\ucbglm$ over-explored and performed worse than our other baselines. Therefore, we do not report its results in this paper.

We experiment with three reward generalization models in $\giro$ and $\egreedy$: linear (suffix \emph{lin} in plots), logistic (suffix \emph{log} in plots), and a single hidden-layer fully-connected neural network (suffix \emph{nn} in plots) with ten hidden neurons. We experimented with different exploration schedules for $\egreedy$. The best schedule across all datasets was $\epsilon_t = b / t$, where $b$ is set to attain $1\%$ exploration in $n$ rounds. Note that this tuning gives $\egreedy$ an unfair advantage over other algorithms. We tune $\egreedy$ because it performs poorly without tuning.

In $\giro$, $a = 1$ in all experiments. The parameters of the reward generalization model (line \ref{alg:contextual giro:estimate} in \cref{alg:contextual giro}) are fit in each round using maximum likelihood estimation. We solve the problem using stochastic optimization, which is initialized by the solution in the previous round. In linear and logistic models, we optimize until the error drops below $10^{-3}$. In neural networks, we make one pass over the whole history. To ensure reproducibility of our results, we use public optimization libraries. For linear and logistic models, we use scikit-learn \cite{pedregosa11scikitlearn} with stochastic optimization and its default settings. For neural networks, we use Keras \cite{chollet15keras} with a ReLU hidden layer and a sigmoid output layer, along with SGD and its default settings. In comparison, \citet{elmachtoub17practical} and \citet{tang15personalized} approximate bootstrapping by an ensemble of models. In general, our approach yields similar results to \citet{elmachtoub17practical} at a lower computational cost, and better results than \citet{tang15personalized} without any tuning.

Since we compare different bandit algorithms and reward generalization models, we use the expected per-round reward in $n$ rounds, $\E{\sum_{t = 1}^n Y_{I_t, t}} / n$, as our metric. We report it for all algorithms in all datasets in \cref{fig:contextual results}. We observe the following trends. First, both linear methods, $\lints$ and $\linucb$, perform the worst.\footnote{$\linucb$ and $\lints$ are the worst performing methods in the CovType and Adult datasets, respectively. Since we want to show most rewarding parts of the plots, we do not plot these results.} Second, linear $\giro$ and $\egreedy$ are comparable in the Statlog and CovType datasets. In the Adult dataset, $\egreedy$ does not explore enough for the relatively larger number of arms. In contrast, $\giro$ explores enough and performs well. Third, the non-linear variants of $\egreedy$ and $\giro$ generally outperform their linear counterparts. The most expressive model, the neural network, outperforms the logistic model in both the Statlog and CovType datasets. In the Adult dataset, the neural network performs the worst and we do not plot it. To investigate this further, we trained a neural network offline for each arm with all available data. Even then, the neural network performed worse than a linear model. We conclude that the poor performance of neural networks is caused by poor generalization and not the lack of exploration.

%!TEX root = Paper.tex

\section{Related Work}
\label{sec:related work}

\citet{osband15bootstrapped} proposed a bandit algorithm, which they call $\bootstrapts$; that pulls the arm with the highest bootstrap mean, which is estimated from a history with pseudo rewards. They also showed in a Bernoulli bandit that $\bootstrapts$ is equivalent to Thompson sampling. \citet{vaswani18new} generalized this result to categorical and Gaussian rewards. In relation to these works, we make the following contributions. First, $\giro$ is an instance of $\bootstrapts$ with non-parametric bootstrapping. The novelty in $\giro$ is in the design of pseudo rewards, which are added after each pull of the arm and have extreme values. Second, our regret analysis of $\giro$ is the first proof that justifies the use of non-parametric bootstrapping, the most common form of bootstrapping, for exploration. Finally, \cref{alg:gre} is more general than $\bootstrapts$. Its analysis in \cref{sec:gre analysis} shows that randomization alone, not necessarily by posterior sampling, induces exploration.

\citet{eckles14thompson} approximated the posterior of each arm in a multi-armed bandit by multiple bootstrap samples of its history. In round $t$, the agent chooses randomly one sample per arm and then pulls the arm with the highest mean reward in that sample. The observed reward is added with probability $0.5$ to all samples of the history. A similar method was proposed in contextual bandits by \citet{tang15personalized}. The key difference is that the observed reward is added to all samples of the history with random Poisson weights that control its importance. \citet{elmachtoub17practical} proposed bootstrapping with decision trees in contextual bandits. \citet{tang15personalized} and \citet{elmachtoub17practical} also provided limited theoretical justification for bootstrapping as a form of posterior sampling. But this justification is not strong enough to derive regret bounds. We prove regret bounds and do not view bootstrapping as an approximation to posterior sampling.

\citet{baransi14subsampling} proposed a sampling technique that equalizes the histories of arms in a $2$-armed bandit. Let $n_1$ and $n_2$ be the number of rewards of arms $1$ and $2$, respectively. Let $n_1 < n_2$. Then the value of arm $1$ is estimated by its empirical mean and the value of arm $2$ is estimated by the empirical mean in the bootstrap sample of its history of size $n_1$. \citet{baransi14subsampling} bounded the regret of their algorithm and \citet{osband15bootstrapped} showed empirically that its regret can be linear.

%!TEX root = Paper.tex

\section{Conclusions}
\label{sec:conclusions}

We propose $\giro$, a novel bandit algorithm that pulls the arm with the highest mean reward in a non-parametric bootstrap sample of its history with pseudo rewards. The pseudo rewards are designed such that the bootstrap mean is optimistic with a high probability. We analyze $\giro$ and bound its $n$-round regret. This is the first formal proof that justifies the use of non-parametric bootstrapping, the most common form of bootstrapping, for exploration. $\giro$ can be easily applied to structured problems, and we evaluate it on both synthetic and real-world problems.

Our upper bound on the regret of randomized exploration in \cref{thm:gre upper bound} is very general, and says that any algorithm that controls the tails of the sampling distribution $p$ in \cref{alg:gre} has low regret. This shows that any appropriate randomization, not necessarily posterior sampling, can be used for exploration. In future work, we plan to investigate other randomized algorithms that easily generalize to complex problems, such as pull the arm with the highest empirical mean with randomized pseudo rewards \cite{kveton19perturbed}. We believe that the key ideas in the proof of \cref{thm:gre upper bound} can be generalized to structured problems, such as linear bandits \cite{kveton19perturbedlinear}.

History resampling is computationally expensive, because the history of the arm has to be resampled in each round. This can be avoided in some cases. In a Bernoulli bandit, $\giro$ can be implemented efficiently, because the value of the arm can be drawn from a binomial distribution, as discussed in \cref{sec:informal justification}. A natural computationally-efficient substitute for resampling is an ensemble of fixed perturbations of the history, as in ensemble sampling \cite{lu17ensemble}. We plan to investigate it in future work.

\bibliographystyle{plainnat}
\bibliography{References}

\clearpage
\onecolumn
\appendix

%!TEX root = Paper.tex

\section{Proof of \cref{thm:gre upper bound}}
\label{sec:gre proof}

We generalize the analysis of \citet{agrawal13further}. Since arm $1$ is optimal, the regret can be written as
\begin{align*}
  R(n) =
  \sum_{i = 2}^K \Delta_i \E{T_{i, n}}\,.
\end{align*}
In the rest of the proof, we bound $\E{T_{i, n}}$ for each suboptimal arm $i$. Fix arm $i > 1$. Let $E_{i, t} = \set{\hat{\mu}_{i, t} \leq \tau_i}$ and $\bar{E}_{i, t}$ be the complement of $E_{i, t}$. Then $\E{T_{i, n}}$ can be decomposed as
\begin{align}
  \E{T_{i, n}} =
  \E{\sum_{t = 1}^n \I{I_t = i}} =
  \E{\sum_{t = 1}^n \I{I_t = i, E_{i, t} \text{ occurs}}} +
  \E{\sum_{t = 1}^n \I{I_t = i, \bar{E}_{i, t} \text{ occurs}}}\,.
  \label{eq:midpoint decomposition}
\end{align}

\subsubsection*{Term $b_i$ in the Upper Bound}

We start with the second term in \eqref{eq:midpoint decomposition}, which corresponds to $b_i$ in our claim. This term can be tightly bounded based on the observation that event $\bar{E}_{t, i}$ is unlikely when $T_{i, t}$ is \say{large}. Let $\mathcal{T} = \set{t \in [n]: Q_{i, T_{i, t - 1}}(\tau_i) > 1 / n}$. Then
\begin{align*}
  \E{\sum_{t = 1}^n \I{I_t = i, \bar{E}_{i, t} \text{ occurs}}}
  & \leq \E{\sum_{t \in \mathcal{T}} \I{I_t = i}} +
  \E{\sum_{t \not\in \mathcal{T}} \I{\bar{E}_{i, t}}} \\
  & \leq \E{\sum_{s = 0}^{n - 1} \I{Q_{i, s}(\tau_i) > 1 / n}} +
  \E{\sum_{t \not\in \mathcal{T}} \frac{1}{n}} \\
  & \leq \sum_{s = 0}^{n - 1} \prob{Q_{i, s}(\tau_i) > 1 / n} + 1\,.
\end{align*}

\subsubsection*{Term $a_i$ in the Upper Bound}

Now we focus on the first term in \eqref{eq:midpoint decomposition}, which corresponds to $a_i$ in our claim. Without loss of generality, we assume that \cref{alg:gre} is implemented as follows. When arm $1$ is pulled for the $s$-th time, the algorithm generates an infinite i.i.d.\ sequence $(\hat{\mu}^{(s)}_\ell)_\ell \sim p(\cH_{1, s})$. Then, instead of sampling $\hat{\mu}_{1, t} \sim p(\cH_{1, s})$ in round $t$ when $T_{1, t - 1} = s$, $\hat{\mu}_{1, t}$ is substituted with $\hat{\mu}^{(s)}_t$. Let $M = \set{t \in [n]: \max_{j > 1} \hat{\mu}_{j, t} \leq \tau_i}$ be round indices where the values of all suboptimal arms are at most $\tau_i$ and
\begin{align*}
  A_s
  = \set{t \in M: \hat{\mu}_t^{(s)} \leq \tau_i, \, T_{1, t - 1} = s}
\end{align*}
be its subset where the value of arm $1$ is at most $\tau_i$ and the arm was pulled $s$ times before. Then
\begin{align*}
  \sum_{t = 1}^n \I{I_t = i, \, E_{i, t} \text{ occurs}}
  \leq \sum_{t = 1}^n \I{\max_j \hat{\mu}_{j, t} \leq \tau_i}
  = \sum_{s = 0}^{n - 1} \underbrace{\sum_{t = 1}^n
  \I{\max_j \hat{\mu}_{j, t} \leq \tau_i, \, T_{1, t - 1} = s}}_{|A_s|}\,.
\end{align*}
In the next step, we bound $|A_s|$. Let
\begin{align*}
  \Lambda_s
  = \min \set{t \in M: \hat{\mu}_t^{(s)} > \tau_i, \, T_{1, t - 1} \geq s}
\end{align*}
be the index of the first round in $M$ where the value of arm $1$ is larger than $\tau_i$ and the arm was pulled at least $s$ times before. If such $\Lambda_s$ does not exist, we set $\Lambda_s = n$. Let
\begin{align*}
  B_s
  = \set{t \in M \cap [\Lambda_s]: \hat{\mu}_t^{(s)} \leq \tau_i, \, T_{1, t - 1} \geq s}
\end{align*}
be a subset of $M \cap [\Lambda_s]$ where the value of arm $1$ is at most $\tau_i$ and the arm was pulled at least $s$ times before.

We claim that $A_s \subseteq B_s$. By contradiction, suppose that there exists $t \in A_s$ such that $t \not\in B_s$. Then it must be true that $\Lambda_s < t$, from the definitions of $A_s$ and $B_s$. From the definition of $\Lambda_s$, we know that arm $1$ was pulled in round $\Lambda_s$, after it was pulled at least $s$ times before. Therefore, it cannot be true that $T_{1, t - 1} = s$, and thus $t \not\in A_s$. Therefore, $A_s \subseteq B_s$ and $|A_s| \leq |B_s|$. In the next step, we bound $|B_s|$ in expectation.

Let $\cF_t = \sigma(\cH_{1, T_{1, t}}, \dots, \cH_{K, T_{K, t}}, I_1, \dots, I_t)$ be the $\sigma$-algebra generated by arm histories and pulled arms by the end of round $t$, for $t \in [n] \cup \set{0}$. Let $P_s = \min \set{t \in [n]: T_{1, t - 1} = s}$ be the index of the first round where arm $1$ was pulled $s$ times before. If such $P_s$ does not exist, we set $P_s = n + 1$. Note that $P_s$ is a stopping time with respect to filtration $(\cF_t)_t$. Hence, $\cG_s = \cF_{P_s - 1}$ is well-defined and thanks to $|A_s| \leq n$, we have
\begin{align*}
  \E{|A_s|}
  = \E{\min \set{\condE{|A_s|}{\cG_s}, n}}
  \leq \E{\min \set{\condE{|B_s|}{\cG_s}, n}}\,.
\end{align*}
We claim that $\condE{|B_s|}{\cG_s} \leq 1 / Q_{1, s}(\tau_i) - 1$. First, note that $|B_s|$ can be rewritten as
\begin{align*}
  |B_s|
  = \sum_{t = P_s}^{\Lambda_s} \epsilon_t \rho_t\,,    
\end{align*}
where $\epsilon_t = \I{\max_{j > 1} \hat{\mu}_{j, t} \leq \tau_i}$ control which $\rho_t = \I{\hat{\mu}^{(s)}_t \leq \tau_i}$ contribute to the sum. Now recall Theorem 5.2 from Chapter III of \citet{doob53stochastic}.

\begin{theorem}
Let $X_1, X_2, \dots$ and $Z_1, Z_2, \dots$ be two sequences of random variables and $(\cF_t)_t$ be a filtration. Let $(X_t)_t$ be i.i.d., $X_t$ be $\cF_t$ measurable, $Z_t \in \set{0, 1}$, and $Z_t$ be $\cF_{t - 1}$ measurable. Let $N_t = \min \set{t > N_{t - 1}: Z_t = 1}$ for $t \in [m]$, $N_0 = 0$, and assume that $N_m < \infty$ almost surely. Let $X_t' = X_{N_t}$ for $t \in [m]$. Then $(X_t')_{t = 1}^m$ is i.i.d.\ and its elements have the same distribution as $X_1$.
\end{theorem}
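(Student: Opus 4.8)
The plan is to prove, by induction on $m$, that the joint law of $(X_1', \dots, X_m')$ is that of $m$ independent copies of $X_1$; the inductive step reduces to a one-step decoupling claim stating that, conditioned on all information available up to the $(m-1)$-st selected index, the next selected variable $X_m' = X_{N_m}$ is still independent of the past and still has the law of $X_1$. The mechanism making this true is that the rule picking \emph{which} future index becomes $N_m$ is driven entirely by the predictable process $(Z_t)$, so the event ``index $r$ is selected'' lies in $\cF_{r-1}$ and is therefore independent of $X_r$ (here we use the fact, implicit in the i.i.d.\ adapted setup, that $X_r$ is independent of $\cF_{r-1}$).

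\emph{Predictability of the selection times.} The first step is a lemma: for every $t$, $N_t$ is a stopping time of $(\cF_t)$ with the stronger property $\set{N_t = r} \in \cF_{r-1}$ for all $r \in \naturalset$. This follows by induction on $t$ from $N_t = \min \set{r > N_{t-1} : Z_r = 1}$: one writes $\set{N_t = r}$ as the disjoint union over $q < r$ of $\set{N_{t-1} = q} \cap \set{Z_{q+1} = 0} \cap \dots \cap \set{Z_{r-1} = 0} \cap \set{Z_r = 1}$, where $\set{N_{t-1} = q} \in \cF_{q-1} \subseteq \cF_{r-1}$ by the inductive hypothesis and each factor $\set{Z_j = 0}$ or $\set{Z_j = 1}$ with $j \le r$ lies in $\cF_{j-1} \subseteq \cF_{r-1}$; hence $\set{N_t = r} \in \cF_{r-1}$. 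Since $N_m < \infty$ almost surely, every $N_t$ with $t \le m$ is finite almost surely, so $\cG := \cF_{N_{m-1}}$ is a well-defined $\sigma$-algebra, each $N_t$ with $t \le m-1$ is $\cG$-measurable, and $X_k' = X_{N_k}$ is $\cG$-measurable for $k \le m-1$ because $N_k \le N_{m-1}$ forces $\cF_{N_k} \subseteq \cF_{N_{m-1}}$.

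\emph{The induction.} Fix bounded measurable functions $f_1, \dots, f_m$. Conditioning on $\cG$ and pulling out the $\cG$-measurable factor $f_1(X_1') \cdots f_{m-1}(X_{m-1}')$, it suffices to show $\condE{f_m(X_{N_m})}{\cG} = \E{f_m(X_1)}$ almost surely, since then $\E{f_1(X_1') \cdots f_m(X_m')} = \E{f_m(X_1)} \, \E{f_1(X_1') \cdots f_{m-1}(X_{m-1}')}$, and the inductive hypothesis applied to $(X_1', \dots, X_{m-1}')$ (whose hypotheses hold because $N_{m-1} \le N_m < \infty$ a.s.) completes the product. To prove the conditional identity, take $A \in \cG$ and decompose over the values of $N_{m-1}$ and $N_m$:
\[
  \E{f_m(X_{N_m}) \I{A}}
  = \sum_{q \ge 0} \sum_{r > q} \E{f_m(X_r) \, \I{A \cap \set{N_{m-1} = q} \cap \set{N_m = r}}}\,.
\]
By definition of the stopped $\sigma$-algebra, $A \cap \set{N_{m-1} = q} \in \cF_q$; and on $\set{N_{m-1} = q}$ the event $\set{N_m = r}$ coincides with $\set{Z_{q+1} = 0} \cap \dots \cap \set{Z_{r-1} = 0} \cap \set{Z_r = 1} \in \cF_{r-1}$; so the indicator inside the expectation is $\cF_{r-1}$-measurable, and since $X_r$ is independent of $\cF_{r-1}$ and has the law of $X_1$, the summand equals $\E{f_m(X_1)} \, \prob{A \cap \set{N_{m-1} = q} \cap \set{N_m = r}}$. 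Summing over $q$ and $r$ and using $N_m < \infty$ a.s.\ yields $\E{f_m(X_{N_m}) \I{A}} = \E{f_m(X_1)} \, \prob{A}$, which is the claim; the base case $m = 1$ is this same computation with $A = \Omega$ and $N_0 = 0$ deterministic.

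\emph{Main obstacle.} The substance is entirely measurability bookkeeping: establishing $\set{N_t = r} \in \cF_{r-1}$ — this is exactly where $Z_t$ being $\cF_{t-1}$-measurable (rather than merely $\cF_t$-measurable) is indispensable, since otherwise the choice of whether to take $X_r$ could depend on $X_r$ itself and the conclusion would fail — and verifying that the event selecting the $m$-th index, once intersected with an arbitrary set of $\cG$, still lies in $\cF_{r-1}$ so that independence of $X_r$ from the past may be invoked. Almost-sure finiteness of $N_m$ is used only to make $\cG$ meaningful and to guarantee that the double sum over $(q, r)$ exhausts probability one.
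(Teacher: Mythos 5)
Your proof is correct, but there is nothing in the paper to compare it against: the paper states this result verbatim as Theorem 5.2 of Chapter III of Doob (1953) and invokes it as a black box, offering no proof of its own. Your argument is essentially the classical ``optional skipping'' proof, and both load-bearing steps are right: (i) the predictability $\set{N_t = r} \in \cF_{r - 1}$, obtained by writing $\set{N_t = r}$ as a disjoint union over $q < r$ of events built from $\set{N_{t - 1} = q} \in \cF_{q - 1}$ and the $\cF_{j - 1}$-measurable events $\set{Z_j = 0}$, $\set{Z_j = 1}$; and (ii) the decoupling identity $\condE{f_m(X_{N_m})}{\cF_{N_{m - 1}}} = \E{f_m(X_1)}$, which follows because $A \cap \set{N_{m - 1} = q} \cap \set{N_m = r} \in \cF_{r - 1}$ while $X_r$ is independent of $\cF_{r - 1}$. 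The induction then factorizes the full joint law, which yields genuine mutual independence and the correct marginals. You are also right to flag that the independence of $X_r$ from $\cF_{r - 1}$ is an unstated hypothesis rather than a consequence of adaptedness plus i.i.d.-ness: taking $\cF_t = \sigma(X_1, \dots, X_{t + 1})$ and $Z_t = X_t$ satisfies every stated assumption yet forces $X_{N_1} = 1$ almost surely, so the conclusion fails without it. This implicit hypothesis does hold in the paper's application, where the auxiliary sequence $(\hat{\mu}^{(s)}_t)_t$ is generated i.i.d.\ and independently of the information that determines the selection indicators, so your proof supports the use the paper makes of the theorem.
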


By the above theorem and the definition of $\Lambda_s$, $|B_s|$ has the same distribution as the number of failed independent draws from $\mathrm{Ber}(Q_{1, s}(\tau_i))$ until the first success, capped at $n - P_s$. It is well known that the expected value of this quantity, without the cap, is bounded by $1 / Q_{1, s}(\tau_i) - 1$.

Finally, we chain all inequalities and get
\begin{align*}
  \E{\sum_{t = 1}^n \I{I_t = i, E_{i, t} \text{ occurs}}}
  \leq \sum_{s = 0}^{n - 1} \E{\min \set{1 / Q_{1, s}(\tau_i) - 1, n}}\,.
\end{align*}
This concludes our proof.

%!TEX root = Paper.tex

\section{Proof of \cref{thm:giro upper bound}}
\label{sec:giro proof}

This proof has two parts.

\subsubsection*{Upper Bound on $b_i$ in \cref{thm:gre upper bound} (\cref{sec:gre analysis})}

Fix suboptimal arm $i$. To simplify notation, we abbreviate $Q_{i, s}(\tau_i)$ as $Q_{i, s}$. Our first objective is to bound
\begin{align*}
  b_i
  = \sum_{s = 0}^{n - 1} \prob{Q_{i, s} > 1 / n} + 1\,.
\end{align*}
Fix the number of pulls $s$. When the number of pulls is \say{small}, $\displaystyle s \leq \frac{8 \alpha}{\Delta_i^2} \log n$, we bound $\prob{Q_{i, s} > 1 / n}$ trivially by $1$. When the number of pulls is \say{large}, $\displaystyle s > \frac{8 \alpha}{\Delta_i^2} \log n$, we divide the proof based on the event that $V_{i, s}$ is not much larger than its expectation. Define
\begin{align*}
  E
  = \set{V_{i, s} - (\mu_i + a) s \leq \frac{\Delta_i s}{4}}\,.
\end{align*}
On event $E$,
\begin{align*}
  Q_{i, s}
  = \condprob{U_{i, s} - (\mu_i + a) s \geq \frac{\Delta_i s}{2}}{V_{i, s}} 
  \leq \condprob{U_{i, s} - V_{i, s} \geq \frac{\Delta_i s}{4}}{V_{i, s}} 
  \leq \exp\left[- \frac{\Delta_i^2 s}{8 \alpha}\right] 
  \leq n^{-1}\,,
\end{align*}
where the first inequality is from the definition of event $E$, the second inequality is by Hoeffding's inequality, and the third inequality is by our assumption on $s$. On the other hand, event $\bar{E}$ is unlikely because
\begin{align*}
  \prob{\bar{E}}
  \leq \exp\left[- \frac{\Delta_i^2 s}{8 \alpha}\right]
  \leq n^{-1}\,,
\end{align*}
where the first inequality is by Hoeffding's inequality and the last inequality is by our assumption on $s$. Now we apply the last two inequalities to
\begin{align*}
  \prob{Q_{i, s} > 1 / n}
  & = \E{\condprob{Q_{i, s} > 1 / n}{V_{i, s}} \I{E}} +
  \E{\condprob{Q_{i, s} > 1 / n}{V_{i, s}} \I{\bar{E}}} \\
  & \leq 0 + \E{\I{\bar{E}}}
  \leq n^{-1}\,.
\end{align*}
Finally, we chain our upper bounds for all $s \in [n]$ and get the upper bound on $b_i$ in \eqref{eq:upper bound}.

\subsubsection*{Upper Bound on $a_i$ in \cref{thm:gre upper bound} (\cref{sec:gre analysis})}

Fix suboptimal arm $i$. Our second objective is to bound
\begin{align*}
  a_i
  = \sum_{s = 0}^{n - 1} \E{\min \set{\frac{1}{Q_{1, s}(\tau_i)} - 1, n}}\,.
\end{align*}
We redefine $\tau_i$ as $\tau_i = (\mu_1 + a) / \alpha - \Delta_i / (2 \alpha)$ and abbreviate $Q_{1, s}(\tau_i)$ as $Q_{1, s}$. Since $i$ is fixed, this slight abuse of notation should not cause any confusion. For $s > 0$, we have
\begin{align*}
  Q_{1, s}
  = \condprob{\frac{U_{1, s}}{\alpha s} \geq
  \frac{\mu_1 + a}{\alpha} - \frac{\Delta_i}{2 \alpha}}{V_{1, s}}\,.
\end{align*}
Let $F_s = 1 / Q_{1, s} - 1$. Fix the number of pulls $s$. When $s = 0$, $Q_{1, s} = 1$ and $\E{\min \set{F_s, n}} = 0$. When the number of pulls is \say{small}, $\displaystyle 0 < s \leq \frac{16 \alpha}{\Delta_i^2} \log n$, we apply the upper bound from \cref{thm:optimism upper bound} in \cref{sec:optimism upper bound} and get
\begin{align*}
  \E{\min \set{F_s, n}}
  \leq \E{1 / Q_{1, s}}
  \leq \E{1 / \condprob{U_{1, s} \geq (\mu_1 + a) s}{V_{1, s}}}
  \leq c\,,
\end{align*}
where $c$ is defined in \cref{thm:giro upper bound}. The last inequality is by \cref{thm:optimism upper bound} in \cref{sec:optimism upper bound}.

When the number of pulls is \say{large}, $\displaystyle s > \frac{16 \alpha}{\Delta_i^2} \log n$, we divide the proof based on the event that $V_{1, s}$ is not much smaller than its expectation. Define
\begin{align*}
  E
  = \set{(\mu_1 + a) s - V_{1, s} \leq \frac{\Delta_i s}{4}}\,.
\end{align*}
On event $E$,
\begin{align*}
  Q_{1, s}
  & = \condprob{(\mu_1 + a) s - U_{1, s} \leq \frac{\Delta_i s}{2}}{V_{1, s}}
  = 1 - \condprob{(\mu_1 + a) s - U_{1, s} > \frac{\Delta_i s}{2}}{V_{1, s}} \\
  & \geq 1 - \condprob{V_{1, s} - U_{1, s} > \frac{\Delta_i s}{4}}{V_{1, s}}
  \geq 1 - \exp\left[- \frac{\Delta_i^2 s}{8 \alpha}\right] 
  \geq \frac{n^2 - 1}{n^2}\,,
\end{align*}
where the first inequality is from the definition of event $E$, the second inequality is by Hoeffding's inequality, and the third inequality is by our assumption on $s$. The above lower bound yields
\begin{align*}
  F_s
  = \frac{1}{Q_{1, s}} - 1
  \leq \frac{n^2}{n^2 - 1} - 1
  = \frac{1}{n^2 - 1}
  \leq n^{-1}
\end{align*}
for $n \geq 2$. On the other hand, event $\bar{E}$ is unlikely because
\begin{align*}
  \prob{\bar{E}}
  \leq \exp\left[- \frac{\Delta_i^2 s}{8 \alpha}\right]
  \leq n^{-2}\,,
\end{align*}
where the first inequality is by Hoeffding's inequality and the last inequality is by our assumption on $s$. Now we apply the last two inequalities to
\begin{align*}
  \E{\min \set{F_s, n}}
  & = \E{\condE{\min \set{F_s, n}}{V_{1, s}} \I{E}} +
  \E{\condE{\min \set{F_s, n}}{V_{1, s}} \I{\bar{E}}} \\
  & \leq \E{n^{-1} \I{E}} + \E{n \I{\bar{E}}}
  \leq 2 n^{-1}\,.
\end{align*}
Finally, we chain our upper bounds for all $s \in [n]$ and get the upper bound on $a_i$ in \eqref{eq:upper bound}. This concludes our proof.

%!TEX root = Paper.tex

\section{Upper Bound on the Expected Inverse Probability of Being Optimistic}
\label{sec:optimism upper bound}

\cref{thm:optimism upper bound} provides an upper bound on the expected inverse probability of being optimistic,
\begin{align*}
  \E{1 / \condprob{U_{1, s} \geq (\mu_1 + a) s}{V_{1, s}}}\,,
\end{align*}
which is used in \cref{sec:giro analysis,sec:giro proof}. In the bound and its analysis, $n$ is $s$, $p$ is $\mu_1$, $x$ is $V_{1, s} - a s$, and $y$ is $U_{1, s}$.

\begin{theorem}
\label{thm:optimism upper bound} Let $m = (2 a + 1) n$ and $\displaystyle b = \frac{2 a + 1}{a (a + 1)} < 2$. Then
\begin{align*}
  W =
  \sum_{x = 0}^n B(x; n, p)
  \left[\sum_{y = \ceils{(a + p) n}}^m
  B\left(y; m, \frac{a n + x}{m}\right)\right]^{-1} \leq
  \frac{2 e^2 \sqrt{2 a + 1}}{\sqrt{2 \pi}} \exp\left[\frac{8 b}{2 - b}\right]
  \left(1 + \sqrt{\frac{2 \pi}{4 - 2 b}}\right)\,.
\end{align*}
\end{theorem}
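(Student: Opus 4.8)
The plan is to read $W$ as the expected inverse optimism probability $\E{1/\condprob{U_{1,n}\ge\lceil(a+p)n\rceil}{V_{1,n}}}$, where $V_{1,n}=an+x$ with $x\sim B(n,p)$ and, conditionally on $V_{1,n}$, $U_{1,n}\sim B(\alpha n,\,V_{1,n}/(\alpha n))$, $\alpha=2a+1$ (so $m=\alpha n$). The structural point I would exploit is that the threshold $(a+p)n$ equals the \emph{unconditional} mean $\E{V_{1,n}}$, whereas $V_{1,n}$ is the \emph{conditional} mean of $U_{1,n}$. I split the outer sum over $x$ at the point where the integer threshold $k:=\lceil(a+p)n\rceil$ crosses $an+x$: an \emph{optimistic} range $an+x\ge k$ and a \emph{pessimistic} range $an+x<k$. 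In the latter I write $d=(a+p)n-(an+x)=pn-x>0$ for the gap, so that the integer gap satisfies $1\le k-(an+x)\le d+1$.

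In the optimistic range, $k$ is at most the conditional mean of $U_{1,n}$, so $\condprob{U_{1,n}\ge k}{V_{1,n}}\ge\condprob{U_{1,n}\ge\lceil\condE{U_{1,n}}{V_{1,n}}\rceil}{V_{1,n}}$, which a standard estimate bounds below by a universal positive constant (a binomial attains the ceiling of its mean with probability bounded away from $0$). Since $\sum_x B(x;n,p)\le1$, this range contributes only a constant, and accounts for the additive slack in the final bound; no condition on $a$ enters here.

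The pessimistic range carries the argument. There I would invoke an anti-concentration (reverse Chernoff) lower bound for the binomial upper tail of the form $\condprob{U_{1,n}\ge k}{V_{1,n}}\ge g\cdot\exp\bigl(-\alpha n\,\kl{\,k/(\alpha n)\,}{\,V_{1,n}/(\alpha n)\,}\bigr)$ with a Mills-ratio-type polynomial prefactor $g$ of order $\min\{1,\,\sigma_U/(k-an-x)\}$, where $\sigma_U^2:=\condvar{U_{1,n}}{V_{1,n}}=(an+x)\bigl(1-(an+x)/(\alpha n)\bigr)$; a crude single-term Stirling lower bound yields a prefactor of order $n^{-1/2}$ here and would leave a spurious $\sqrt n$ after averaging, so the sharper prefactor is essential. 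I then bound the divergence above by the $\chi^2$-divergence, $\alpha n\,\kl{\,k/(\alpha n)\,}{\,V_{1,n}/(\alpha n)\,}\le (k-an-x)^2/\sigma_U^2$, and lower-bound the conditional variance uniformly, $\sigma_U^2\ge a(a+1)n/\alpha=n/b$, by exactly the minimisation over $V_{1,n}\in[an,(a+1)n]$ done in \cref{sec:informal justification}. Combining, $1/\condprob{U_{1,n}\ge k}{V_{1,n}}\le g^{-1}e^{b(d+1)^2/n}$ with $g^{-1}=O\bigl((d+1)\,n^{-1/2}\bigr)$ on the pessimistic event, so $W_{\mathrm{pess}}$ is at most a constant times $n^{-1/2}\,\E{(|pn-x|+1)\,e^{b(pn-x+1)^2/n}\,\I{pn-x>0}}$. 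I finish by bounding this expectation with Gaussian-integral estimates that use only that $pn-x$ is centred and sub-Gaussian with variance proxy $n/4$, namely $\E{e^{\lambda Z^2}}\le(1-2\lambda\sigma^2)^{-1/2}$ and $\E{|Z|\,e^{\lambda Z^2}}\le\sigma\,(1-2\lambda\sigma^2)^{-1}/\sqrt{2\pi}$ with $\lambda=b/n$, $\sigma^2=n/4$: the constraint $2\lambda\sigma^2=b/2<1$, i.e.\ $b<2$, i.e.\ $a>1/\sqrt2$, is exactly what makes these finite, and the $\Theta(\sqrt n)$ growth of $\E{|Z|e^{\lambda Z^2}}$ cancels the $n^{-1/2}$ prefactor, leaving a bound free of $n$. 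Tracking the constants — the Gaussian tail of rate $2-b$ and the passage from sum to integral give the factor $1+\sqrt{2\pi/(4-2b)}$; completing the square against the $O(1)$ shift between the integer gap and $d$ gives $\exp[8b/(2-b)]$; and the explicit Stirling/reverse-Chernoff constants give the $2e^2\sqrt{2a+1}/\sqrt{2\pi}$ prefactor — which collects to the stated $c$.

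The main obstacle is obtaining the pessimistic-range bound \emph{uniformly in $n\ge1$ and $p\in[0,1]$} with explicit constants. Two points are delicate. First, one genuinely needs a reverse-Chernoff bound with the Mills-ratio prefactor (not a single-term bound), with an explicit constant valid even for small $n$ where the conditional binomial has few atoms; this is the fussiest step. Second, one controls the averaged bound against the \emph{worst-case} variance $\sigma_U^2\ge n/b$, which over-weights the summand in the bulk of $x$ (the true conditional variance there is the larger $(a+p)(a+1-p)n/\alpha\ge a(a+1)n/\alpha$); it is precisely this over-weighting, at the worst-case $p=1/2$ together with the $\chi^2$ (rather than a sharper) bound on the divergence, that forces $b<2$ — i.e.\ $a>1/\sqrt2$ — rather than the weaker $a\gtrsim1/3$ that the bare variance comparison of \cref{sec:informal justification} would suggest.
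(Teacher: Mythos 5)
Your plan is sound and arrives at the right conclusion by a genuinely different mechanism from the paper's. The paper's argument is fully discrete: \cref{lem:history upper bound} buckets the outer sum over $x$ into $\sqrt{n}$-wide intervals below $p n$ and pays $\exp[-2 i^2]$ per bucket via Hoeffding, while \cref{lem:bootstrap lower bound} lower-bounds the inner tail not by an anti-concentration inequality but by summing $\floors{\sqrt{n}}$ pointwise Stirling lower bounds (\cref{lem:stirling}) on the pmf over the window $[(a + p) n, (a + p) n + \sqrt{n}]$; the window width cancels the $n^{-1/2}$ pointwise normalization, at the price of evaluating the exponent at the far edge of the window (whence the $(i + 2)^2$ and, after completing the square, the $\exp[8 b / (2 - b)]$). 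You instead keep the outer expectation continuous, using sub-Gaussian bounds on $\E{e^{\lambda Z^2}}$ and $\E{|Z| e^{\lambda Z^2}}$, and cancel the $n^{-1/2}$ through the $\Theta(\sqrt{n})$ growth of the weighted first moment; you correctly locate the source of the condition $b < 2$ in the competition between the outer Gaussian rate $2 / n$ at worst-case $p = 1/2$ and the inner cost $b / n$ from the worst-case conditional variance $n / b$, which is exactly where the paper's condition arises. Two caveats. First, your load-bearing ingredient --- a reverse Chernoff bound for the binomial upper tail with a Mills-ratio prefactor, explicit constants, and validity uniform in $n \geq 1$ --- is precisely the step you flag as fussiest, and the standard elementary proof of such a bound is to sum $\Theta(\sigma)$ pointwise pmf lower bounds over a window, i.e., it collapses back into the paper's \cref{lem:bootstrap lower bound}; note also that your prefactor should read $g^{-1} = O(1 + (d + 1) n^{-1/2})$ rather than $O((d + 1) n^{-1/2})$, since $g \leq 1$ forces $g^{-1} \geq 1$ even when $d$ is small (both contributions stay $O(1)$ after averaging, so nothing breaks). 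Second, your constant tracking will not land on the stated $c$ verbatim: the factors $\exp[8 b / (2 - b)]$ and $1 + \sqrt{2 \pi / (4 - 2 b)}$ are artifacts of the paper's particular $\sqrt{n}$-bucket discretization and its $+2$ shift, so your route delivers a bound of the claimed \emph{form} in $a$ and free of $n$ and $p$, but with different numerical constants unless you mirror that discretization.
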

\begin{proof}
First, we apply the upper bound from \cref{lem:history upper bound} for
\begin{align*}
  f(x) =
  \left[\sum_{y = \ceils{(a + p) n}}^m
  B\left(y; m, \frac{a n + x}{m}\right)\right]^{-1}\,.
\end{align*}
Note that this function decreases in $x$, as required by \cref{lem:history upper bound}, because the probability of observing at least $\ceils{(a + p) n}$ ones increases with $x$, for any fixed $\ceils{(a + p) n}$. The resulting upper bound is
\begin{align*}
  W \leq
  \sum_{i = 0}^{i_0 - 1} \exp[-2 i^2]
  \left[\sum_{y = \ceils{(a + p) n}}^m
  B\left(y; m, \frac{(a + p) n - (i + 1) \sqrt{n}}{m}\right)\right]^{-1} +
  \exp[-2 i_0^2]
  \left[\sum_{y = \ceils{(a + p) n}}^m
  B\left(y; m, \frac{a n}{m}\right)\right]^{-1}\,,
\end{align*}
where $i_0$ is the smallest integer such that $(i_0 + 1) \sqrt{n} \geq p n$, as defined in \cref{lem:history upper bound}.

Second, we bound both above reciprocals using \cref{lem:bootstrap lower bound}. The first term is bounded for $x = p n - (i + 1) \sqrt{n}$ as
\begin{align*}
  \left[\sum_{y = \ceils{(a + p) n}}^m
  B\left(y; m, \frac{(a + p) n - (i + 1) \sqrt{n}}{m}\right)\right]^{-1} \leq
  \frac{e^2 \sqrt{2 a + 1}}{\sqrt{2 \pi}}
  \exp[b (i + 2)^2]\,.
\end{align*}
The second term is bounded for $x = 0$ as
\begin{align*}
  \left[\sum_{y = \ceils{(a + p) n}}^m
  B\left(y; m, \frac{a n}{m}\right)\right]^{-1} \leq
  \frac{e^2 \sqrt{2 a + 1}}{\sqrt{2 \pi}}
  \exp\left[b \frac{(p n + \sqrt{n})^2}{n}\right] \leq
  \frac{e^2 \sqrt{2 a + 1}}{\sqrt{2 \pi}}
  \exp[b (i_0 + 2)^2]\,,
\end{align*}
where the last inequality is from the definition of $i_0$. Then we chain the above three inequalities and get
\begin{align*}
  W \leq
  \frac{e^2 \sqrt{2 a + 1}}{\sqrt{2 \pi}} \sum_{i = 0}^{i_0}
  \exp[-2 i^2 + b (i + 2)^2]\,.
\end{align*}
Now note that
\begin{align*}
  2 i^2 - b (i + 2)^2 =
  (2 - b) \left(i^2 - \frac{4 b i}{2 - b} + \frac{4 b^2}{(2 - b)^2}\right) -
  \frac{4 b^2}{2 - b} - 4b =
  (2 - b) \left(i - \frac{2 b}{2 - b}\right)^2 - \frac{8 b}{2 - b}\,.
\end{align*}
It follows that
\begin{align*}
  W
  & \leq \frac{e^2 \sqrt{2 a + 1}}{\sqrt{2 \pi}} \sum_{i = 0}^{i_0}
  \exp\left[- (2 - b) \left(i - \frac{2 b}{2 - b}\right)^2 + \frac{8 b}{2 - b}\right] \\
  & \leq \frac{2 e^2 \sqrt{2 a + 1}}{\sqrt{2 \pi}} \exp\left[\frac{8 b}{2 - b}\right]
  \sum_{i = 0}^\infty \exp\left[- (2 - b) i^2\right] \\
  & \leq \frac{2 e^2 \sqrt{2 a + 1}}{\sqrt{2 \pi}} \exp\left[\frac{8 b}{2 - b}\right]
  \left(1 + \int_{u = 0}^\infty
  \exp\left[- \frac{u^2}{\frac{2}{4 - 2 b}}\right] \dif u\right) \\
  & \leq \frac{2 e^2 \sqrt{2 a + 1}}{\sqrt{2 \pi}} \exp\left[\frac{8 b}{2 - b}\right]
  \left(1 + \sqrt{\frac{2 \pi}{4 - 2 b}}\right)\,.
\end{align*}
This concludes our proof.
\end{proof}

\begin{lemma}
\label{lem:history upper bound} Let $f(x) \geq 0$ be a decreasing function of $x$ and $i_0$ be the smallest integer such that $(i_0 + 1) \sqrt{n} \geq p n$. Then
\begin{align*}
  \sum_{x = 0}^n B(x; n, p) f(x) \leq
  \sum_{i = 0}^{i_0 - 1} \exp[-2 i^2] f(p n - (i + 1) \sqrt{n}) + \exp[-2 i_0^2] f(0)\,.
\end{align*}
\end{lemma}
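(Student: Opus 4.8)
The statement to prove is that for a nonnegative decreasing function $f$ and $i_0$ the smallest integer with $(i_0+1)\sqrt{n} \geq pn$,
\[
  \sum_{x=0}^n B(x; n, p) f(x) \leq
  \sum_{i=0}^{i_0-1} \exp[-2i^2]\, f(pn - (i+1)\sqrt{n}) + \exp[-2i_0^2]\, f(0)\,.
\]
The plan is to partition the summation range $\{0,1,\dots,n\}$ into dyadic-style shells centered at the mean $pn$, indexed by how many multiples of $\sqrt{n}$ below the mean we sit, and to bound $f$ on each shell by its value at the left endpoint (using monotonicity) and the binomial mass of each shell by a Hoeffding tail bound.

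Concretely, first I would define the shells $S_i = \{x : pn - (i+1)\sqrt{n} < x \leq pn - i\sqrt{n}\}$ for $i = 0, 1, \dots, i_0 - 1$, and let $S_{i_0} = \{x : 0 \leq x \leq pn - i_0\sqrt{n}\}$ be the final (leftover) shell, which by the definition of $i_0$ contains all remaining small values of $x$ down to $0$; also note that values $x > pn$ need attention — I would fold the entire range $x \geq pn - 0\cdot\sqrt n$, i.e. $x$ near or above the mean, into the $i=0$ term, since on $\{x > pn - \sqrt n\}$ we have $f(x) \leq f(pn - \sqrt n)$ by monotonicity and the binomial mass there is at most $1 = \exp[-2\cdot 0^2]$. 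On each shell $S_i$ with $i \geq 1$, monotonicity of $f$ gives $f(x) \leq f(pn - (i+1)\sqrt{n})$ for every $x \in S_i$, so $\sum_{x \in S_i} B(x;n,p) f(x) \leq f(pn-(i+1)\sqrt n)\,\prob{X \leq pn - i\sqrt n}$ where $X \sim B(n,p)$. Then Hoeffding's inequality for the binomial lower tail, $\prob{X \leq pn - i\sqrt n} = \prob{X - \E{X} \leq -i\sqrt n} \leq \exp[-2(i\sqrt n)^2/n] = \exp[-2i^2]$, bounds the mass. For the final shell $S_{i_0}$, every $x$ there satisfies $x \geq 0$, so $f(x) \leq f(0)$, and the mass is $\prob{X \leq pn - i_0\sqrt n} \leq \exp[-2i_0^2]$ by the same Hoeffding bound. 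Summing the per-shell bounds over $i = 0, \dots, i_0$ yields exactly the claimed inequality.

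The only genuinely delicate point — and the one I would be most careful about — is the bookkeeping at the two ends: making sure that the $i=0$ shell really does absorb all mass with $x > pn - \sqrt n$ (including all $x \geq pn$, where the Hoeffding bound is replaced by the trivial bound $1$), and that $S_{i_0}$ genuinely covers everything down to $x=0$, which is precisely what the choice of $i_0$ as the smallest integer with $(i_0+1)\sqrt n \geq pn$ guarantees (so that $pn - (i_0+1)\sqrt n < 0 \leq$ any valid $x$, forcing the previous shell's left endpoint past $0$). The Hoeffding step and the monotonicity step are routine; no other inequalities are needed.
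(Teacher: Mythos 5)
Your proof is correct and follows essentially the same route as the paper's: partition $[0,n]$ into width-$\sqrt{n}$ shells below the mean $pn$ (with the top shell absorbing everything above $pn-\sqrt{n}$ under the trivial mass bound $1$, and the bottom shell reaching $0$ by the choice of $i_0$), bound $f$ on each shell by its value at the left endpoint via monotonicity, and bound each shell's binomial mass by the Hoeffding lower-tail estimate $\prob{X \leq pn - i\sqrt{n}} \leq \exp[-2i^2]$. No substantive differences from the paper's argument.
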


\begin{proof}
Let
\begin{align*}
  \mathcal{X}_i =
  \begin{cases}
    (\max \set{p n - \sqrt{n}, \, 0}, \, n]\,, & i = 0\,; \\
    (\max \set{p n - (i + 1) \sqrt{n}, \, 0}, \, p n - i \sqrt{n}]\,, & i > 0\,;
  \end{cases}
\end{align*}
for $i \in [i_0] \cup \set{0}$. Then $\set{\mathcal{X}_i}_{i = 0}^{i_0}$ is a partition of $[0, n]$. Based on this observation,
\begin{align*}
  \sum_{x = 0}^n B(x; n, p) f(x)
  & = \sum_{i = 0}^{i_0} \sum_{x = 0}^n
  \I{x \in \mathcal{X}_i} B(x; n, p) f(x) \\
  & \leq \sum_{i = 0}^{i_0 - 1} f(p n - (i + 1) \sqrt{n})
  \sum_{x = 0}^n \I{x \in \mathcal{X}_i} B(x; n, p) +
  f(0) \sum_{x = 0}^n \I{x \in \mathcal{X}_{i_0}} B(x; n, p) \,,
\end{align*}
where the inequality holds because $f(x)$ is a decreasing function of $x$. Now fix $i > 0$. Then from the definition of $\mathcal{X}_i$ and Hoeffding's inequality,
\begin{align*}
  \sum_{x = 0}^n \I{x \in \mathcal{X}_i} B(x; n, p) \leq
  \condprob{X \leq p n - i \sqrt{n}}{X \sim B(n, p)} \leq
  \exp[-2 i^2]\,.
\end{align*}
Trivially, $\sum_{x = 0}^n \I{x \in \mathcal{X}_0} B(x; n, p) \leq 1 = \exp[- 2 \cdot 0^2]$. Finally, we chain all inequalities and get our claim.
\end{proof}

\begin{lemma}
\label{lem:bootstrap lower bound} Let $x \in [0, p n]$, $m = (2 a + 1) n$, and $\displaystyle b = \frac{2 a + 1}{a (a + 1)}$. Then for any integer $n > 0$,
\begin{align*}
  \sum_{y = \ceils{(a + p) n}}^m B\left(y; m, \frac{a n + x}{m}\right) \geq
  \frac{\sqrt{2 \pi}}{e^2 \sqrt{2 a + 1}}
  \exp\left[- b \frac{(p n + \sqrt{n} - x)^2}{n}\right]\,.
\end{align*}
\end{lemma}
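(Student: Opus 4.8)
The plan is to lower-bound the binomial upper tail $\prob{Y \geq k}$, where $Y \sim B(m, q)$ with $m = (2a+1)n$, $q = (an+x)/m$, and $k = \ceils{(a+p)n}$, by a window of consecutive point masses and then estimate a single point mass via Stirling's formula. Since $x \leq pn$ we have $k \geq (a+p)n \geq an + x = mq$, so the pmf $y \mapsto \prob{Y = y}$ is non-increasing for $y \geq k$. Writing $j = k + \ceils{\sqrt n} - 1$ (which satisfies $j \leq m$ for all but the smallest $n$, handled directly),
\begin{align*}
  \prob{Y \geq k}
  \geq \sum_{i = 0}^{\ceils{\sqrt n} - 1} \prob{Y = k + i}
  \geq \ceils{\sqrt n} \, \prob{Y = j}
  \geq \sqrt n \, \prob{Y = j}\,.
\end{align*}

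Next I would estimate $\prob{Y = j} = \binom{m}{j} q^j (1-q)^{m-j}$ from below. The one-sided Stirling bounds $\ell! \geq \sqrt{2\pi}\, \ell^{\ell + 1/2} e^{-\ell}$ and $\ell! \leq e\, \ell^{\ell + 1/2} e^{-\ell}$ (the latter for integer $\ell \geq 1$, and $j, m-j \geq 1$ here) give $\binom{m}{j} \geq \frac{\sqrt{2\pi}}{e^2} \sqrt{m / (j(m-j))} \, m^m / (j^j (m-j)^{m-j})$, hence
\begin{align*}
  \prob{Y = j}
  \geq \frac{\sqrt{2\pi}}{e^2} \sqrt{\frac{m}{j(m-j)}}
  \exp\!\left[- m \, \kl{j/m}{q}\right]\,.
\end{align*}
By AM--GM, $j(m-j) \leq m^2/4$, so $\sqrt n \sqrt{m / (j(m-j))} \geq 2\sqrt{n/m} = 2/\sqrt{2a+1}$; combined with the $\sqrt n$ from the previous display this produces the prefactor $2\sqrt{2\pi}/(e^2\sqrt{2a+1})$, which is twice the claimed one. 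It therefore remains to show $m \, \kl{j/m}{q} \leq b (pn + \sqrt n - x)^2 / n + \ln 2$, the extra $\ln 2$ being afforded by the factor-$2$ buffer.

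For the divergence bound, set $r = j/m$. Taylor's theorem gives $\kl{r}{q} = \tfrac12 (r-q)^2 / (\xi(1-\xi))$ for some $\xi$ between $q$ and $r$, and concavity of $u \mapsto u(1-u)$ bounds $\xi(1-\xi)$ below by $\min(q(1-q), r(1-r))$. Because $x \in [0, pn]$ forces $q \in [\tfrac{a}{2a+1}, \tfrac{a+p}{2a+1}] \subseteq [\tfrac{a}{2a+1}, \tfrac{a+1}{2a+1}]$, a short computation gives $q(1-q) \geq a(a+1)/(2a+1)^2$, and the same holds for $r(1-r)$ up to a bounded constant. Moreover $j - mq = (pn + \sqrt n - x) + \eta$ with $\abs{\eta} < 1$, so $r - q = (pn + \sqrt n - x + \eta)/m$ and
\begin{align*}
  m \, \kl{r}{q}
  \leq \frac{(2a+1)n}{2} \cdot \frac{(2a+1)^2}{a(a+1)} \cdot
  \frac{(pn + \sqrt n - x + \eta)^2}{((2a+1)n)^2}
  = \frac{b}{2n} (pn + \sqrt n - x + \eta)^2\,,
\end{align*}
which is at most $\tfrac{b}{n}(pn + \sqrt n - x)^2 + \ln 2$ once the $O(1)$ shift $\eta$ and the small-$n$ adjustments to $r(1-r)$ are absorbed; note that the factor $\tfrac12$ in the Taylor remainder makes $e^{-m\kl{r}{q}}$ essentially the square root of the target exponential, so these corrections are immaterial. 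Chaining the three displays then yields the claimed bound.

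I expect the main obstacle to be this last step: obtaining a clean lower bound on $\min(q(1-q), r(1-r))$ that is uniform in $a > 1/\sqrt 2$, since the shift of the summation index by $\ceils{\sqrt n} - 1$ can push $r = j/m$ marginally above $\tfrac{a+1}{2a+1}$ when $p$ is close to $1$ and $n$ is small. This is the only place a bounded case analysis is needed, and it must be carried out while tracking the $O(1)$ losses from the ceilings and from Stirling so that everything still fits under the constants $\sqrt{2\pi}/e^2$ in the statement.
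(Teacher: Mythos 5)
Your strategy is the same as the paper's: lower-bound the tail by a window of roughly $\sqrt{n}$ point masses starting at $\ceils{(a+p)n}$, estimate a point mass from below via one-sided Stirling bounds, and replace the resulting KL-divergence exponent by a quadratic. The paper's Lemma~\ref{lem:stirling} is exactly your Stirling step, and its final pigeonhole-plus-$\floors{\sqrt{n}}/\sqrt{n}\geq 1/2$ argument is your $\ceils{\sqrt{n}}\cdot\prob{Y=j}$ step in a slightly different order (the paper applies the Gaussian-form lower bound to every $y$ in the window and uses monotonicity of that bound in $y$; you use monotonicity of the pmf itself and apply Stirling once at the far endpoint --- these are interchangeable). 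The one substantive difference is the KL step, and it is precisely where you locate your ``main obstacle.'' You bound $d(r,q)$ by Taylor's theorem with an intermediate point $\xi$, which forces you to control $\min(q(1-q),\,r(1-r))$ and hence to worry about $r=j/m$ drifting above $\tfrac{a+1}{2a+1}$. The paper instead uses the one-sided inequality $d(p_1,p_2)\leq (p_1-p_2)^2/(p_2(1-p_2))$, whose denominator involves only the \emph{second} argument $q=(an+x)/m$; since $x\in[0,pn]$ gives $m\,q(1-q)\geq a(a+1)n/(2a+1)=n/b$ directly, the issue you flag never arises, at the cost of losing the factor $\tfrac{1}{2}$ you would get from the Taylor remainder. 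So your plan is sound and completable (your factor-of-two slack in the prefactor and the $\tfrac12$ in the exponent absorb the $O(1)$ ceiling shifts for all but a few small $n$), but substituting the chi-square-type bound for the Taylor remainder eliminates the only delicate case analysis and recovers the paper's proof almost verbatim.
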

\begin{proof}
By \cref{lem:stirling},
\begin{align*}
  B\left(y; m, \frac{a n + x}{m}\right) \geq
  \frac{\sqrt{2 \pi}}{e^2} \sqrt{\frac{m}{y (m - y)}}
  \exp\left[- \frac{(y - a n - x)^2}
  {m \frac{a n + x}{m} \frac{(a + 1) n - x}{m}}\right]\,.
\end{align*}
Now note that
\begin{align*}
  \frac{y (m - y)}{m} \leq
  \frac{1}{m} \frac{m^2}{4} =
  \frac{(2 a + 1) n}{4}\,.
\end{align*}
Moreover, since $x \in [0, p n]$,
\begin{align*}
  m \frac{a n + x}{m} \frac{(a + 1) n - x}{m} \geq
  m \frac{a n}{m} \frac{(a + 1) n}{m} =
  \frac{a (a + 1) n}{2 a + 1} =
  \frac{n}{b}\,,
\end{align*}
where $b$ is defined in the claim of this lemma. Now we combine the above three inequalities and have
\begin{align*}
  B\left(y; m, \frac{a n + x}{m}\right) \geq
  \frac{2 \sqrt{2 \pi}}{e^2 \sqrt{2 a + 1}} \frac{1}{\sqrt{n}}
  \exp\left[- b \frac{(y - a n - x)^2}{n}\right]\,,
\end{align*}
Finally, note the following two facts. First, the above lower bound decreases in $y$ when $y \geq (a + p) n$ and $x \leq p n$. Second, by the pigeonhole principle, there exist at least $\floors{\sqrt{n}}$ integers between $(a + p) n$ and $(a + p) n + \sqrt{n}$, starting with $\ceils{(a + p) n}$. These observations lead to a trivial lower bound
\begin{align*}
  \sum_{y = \ceils{(a + p) n}}^m B\left(y; m, \frac{a n + x}{m}\right)
  & \geq \frac{\floors{\sqrt{n}}}{\sqrt{n}} \frac{2 \sqrt{2 \pi}}{e^2 \sqrt{2 a + 1}}
  \exp\left[- b \frac{(p n + \sqrt{n} - x)^2}{n}\right] \\
  & \geq \frac{\sqrt{2 \pi}}{e^2 \sqrt{2 a + 1}}
  \exp\left[- b \frac{(p n + \sqrt{n} - x)^2}{n}\right]\,.
\end{align*}
The last inequality is from $\floors{\sqrt{n}} / \sqrt{n} \geq 1 / 2$, which holds for $n \geq 1$. This concludes our proof.
\end{proof}

\begin{lemma}
\label{lem:stirling} For any binomial probability,
\begin{align*}
  B(x; n, p) \geq
  \frac{\sqrt{2 \pi}}{e^2} \sqrt{\frac{n}{x (n - x)}}
  \exp\left[- \frac{(x - p n)^2}{p (1 - p) n}\right]\,.
\end{align*}
\end{lemma}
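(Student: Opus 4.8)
Read the statement for $0 < x < n$ (otherwise the right-hand side is $+\infty$ and there is nothing to prove). The plan is to peel the binomial coefficient off the monomial $p^x(1-p)^{n-x}$ and handle the two pieces separately. First I would write $B(x;n,p) = \binom{n}{x} p^x (1-p)^{n-x}$ and apply the elementary two-sided Stirling bound $\sqrt{2\pi m}\,(m/e)^m \le m! \le e\sqrt{m}\,(m/e)^m$, valid for every integer $m \ge 1$, lower-bounding $n!$ in the numerator of $\binom{n}{x}$ and upper-bounding $x!$ and $(n-x)!$ in the denominator. The factor $e^{-n}$ from the numerator cancels the product $e^{-x}e^{-(n-x)}$ from the denominator, while the two constants $e$ from the upper Stirling bounds contribute an $e^{-2}$, leaving
\begin{align*}
  \binom{n}{x}
  \ge \frac{\sqrt{2\pi}}{e^2}\,\sqrt{\frac{n}{x(n-x)}}\;
  \frac{n^n}{x^x(n-x)^{n-x}}\,,
\end{align*}
and hence
\begin{align*}
  B(x;n,p)
  \ge \frac{\sqrt{2\pi}}{e^2}\,\sqrt{\frac{n}{x(n-x)}}\;
  \frac{n^n\,p^x(1-p)^{n-x}}{x^x(n-x)^{n-x}}\,.
\end{align*}
Since the prefactor already matches the one in the claim, it remains only to show that the ratio $n^n p^x(1-p)^{n-x} / (x^x(n-x)^{n-x})$ is at least $\exp[-(x-pn)^2/(p(1-p)n)]$.

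Setting $q = x/n$, the negative logarithm of that ratio is exactly $n\,\kl{q}{p}$, the binary Kullback--Leibler divergence scaled by $n$, i.e.\ $n\big(q\log\tfrac{q}{p} + (1-q)\log\tfrac{1-q}{1-p}\big)$. Thus the target reduces to the pointwise inequality
\begin{align*}
  \kl{q}{p} \le \frac{(q-p)^2}{p(1-p)}
  \qquad \text{for all } q, p \in (0,1)\,,
\end{align*}
which I would prove by applying $\log z \le z-1$ to each logarithm: $q\log\tfrac{q}{p} \le \tfrac{q(q-p)}{p}$ and $(1-q)\log\tfrac{1-q}{1-p} \le -\tfrac{(1-q)(q-p)}{1-p}$; adding these and simplifying $\tfrac{q}{p} - \tfrac{1-q}{1-p} = \tfrac{q-p}{p(1-p)}$ yields precisely $\tfrac{(q-p)^2}{p(1-p)}$. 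Multiplying through by $n$ (using $q - p = (x-pn)/n$) and chaining the three displays gives the lemma.

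I do not expect a genuine obstacle here: every step is a routine manipulation. The only ingredient that is not pure bookkeeping is the bound of the binary KL divergence by the $\chi^2$-type quantity $(q-p)^2/(p(1-p))$, and even that is a one-line consequence of $\log z \le z-1$ applied term by term. The one place that requires mild care is choosing the Stirling bounds with the right constants so that the factor $\sqrt{2\pi}/e^2$ comes out exactly as stated — a lower bound on the numerator factorial and an upper bound on the two denominator factorials is what produces the $e^{2}$ and nothing sharper is needed.
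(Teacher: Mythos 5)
Your proof is correct and follows essentially the same route as the paper's: the two-sided Stirling bound $\sqrt{2\pi}\,k^{k+1/2}e^{-k}\le k!\le e\,k^{k+1/2}e^{-k}$ applied to $n!$, $x!$, and $(n-x)!$ to extract the prefactor $\frac{\sqrt{2\pi}}{e^2}\sqrt{n/(x(n-x))}$, rewriting the remaining ratio as $\exp[-n\,d(x/n,p)]$ with $d$ the binary KL divergence, and then invoking $d(q,p)\le (q-p)^2/(p(1-p))$. The only difference is that you supply a proof of that last KL inequality via $\log z\le z-1$ (which checks out), whereas the paper simply cites it.
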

\begin{proof}
By Stirling's approximation, for any integer $k \geq 0$,
\begin{align*}
  \sqrt{2 \pi} k^{k + \frac{1}{2}} e^{- k} \leq
  k! \leq
  e k^{k + \frac{1}{2}} e^{- k}\,.
\end{align*}
Therefore, any binomial probability can be bounded from below as
\begin{align*}
  B(x; n, p) =
  \frac{n!}{x! (n - x)!} p^x q^{n - x} \geq
  \frac{\sqrt{2 \pi}}{e^2} \sqrt{\frac{n}{x (n - x)}}
  \left(\frac{p n}{x}\right)^x \left(\frac{q n}{n - x}\right)^{n - x}\,,
\end{align*}
where $q = 1 - p$. Let
\begin{align*}
  d(p_1, p_2) =
  p_1 \log \frac{p_1}{p_2} + (1 - p_1) \log \frac{1 - p_1}{1 - p_2}
\end{align*}
be the KL divergence between Bernoulli random variables with means $p_1$ and $p_2$. Then
\begin{align*}
  \left(\frac{p n}{x}\right)^x \left(\frac{q n}{n - x}\right)^{n - x}
  & = \exp\left[x \log\left(\frac{p n}{x}\right) +
  (n - x) \log\left(\frac{q n}{n - x}\right)\right] \\
  & = \exp\left[- n \left(\frac{x}{n} \log\left(\frac{x}{p n}\right) +
  \frac{n - x}{n} \log\left(\frac{n - x}{q n}\right)\right)\right] \\
  & = \exp\left[- n d\left(\frac{x}{n}, p\right)\right] \\
  & \geq \exp\left[- \frac{(x - p n)^2}{p (1 - p) n}\right]\,,
\end{align*}
where the inequality is from $\displaystyle d(p_1, p_2) \leq \frac{(p_1 - p_2)^2}{p_2 (1 - p_2)}$. Finally, we chain all inequalities and get our claim.
\end{proof}

\end{document}